\title{Co-Mixup: Saliency Guided Joint Mixup with Supermodular Diversity}
\newcommand\blfootnote[1]{%
  \begingroup
  \renewcommand\thefootnote{}\footnote{#1}%
  \addtocounter{footnote}{-1}%
  \endgroup
}
\author{Jang-Hyun Kim, Wonho Choo, Hosan Jeong, Hyun Oh Song\\
Department of Computer Science and Engineering, 
Seoul National University\\
Neural Processing Research Center\\
\texttt{\{janghyun,wonho.choo,grazinglion,hyunoh\}@mllab.snu.ac.kr}
}
\begin{document}
\maketitle

\vspace{-0.2cm}
\begin{abstract}
While deep neural networks show great performance on fitting to the training distribution, improving the networks' generalization performance to the test distribution and robustness to the sensitivity to input perturbations still remain as a challenge. Although a number of mixup based augmentation strategies have been proposed to partially address them, it remains unclear as to how to best utilize the supervisory signal within each input data for mixup from the optimization perspective. We propose a new perspective on batch mixup and formulate the optimal construction of a batch of mixup data maximizing the data saliency measure of each individual mixup data and encouraging the supermodular diversity among the constructed mixup data. This leads to a novel discrete optimization problem minimizing the difference between submodular functions. We also propose an efficient modular approximation based iterative submodular minimization algorithm for efficient mixup computation per each minibatch suitable for minibatch based neural network training. Our experiments show the proposed method achieves the state of the art generalization, calibration, and weakly supervised localization results compared to other mixup methods.
The source code is available at \url{https://github.com/snu-mllab/Co-Mixup}\blfootnote{Correspondence to: Hyun Oh Song.}.
\end{abstract}

\section{Introduction}
Deep neural networks have been applied to a wide range of artificial intelligence tasks such as computer vision, natural language processing, and signal processing with remarkable performance \citep{faster_rcnn,bert,wavenet}. However, it has been shown that neural networks have excessive representation capability and can even fit random data \citep{zhang2016}. Due to these characteristics, the neural networks can easily overfit to training data and show a large generalization gap when tested on previously unseen data.

To improve the generalization performance of the neural networks, a body of research has been proposed to develop regularizers based on priors or to augment the training data with task-dependent transforms \citep{bishop, autoaugment}. Recently, a new task-independent data augmentation technique, called \textit{mixup}, has been proposed \citep{mixup}. The original mixup, called \textit{Input Mixup}, linearly interpolates a given pair of input data and can be easily applied to various data and tasks, improving the generalization performance and robustness of neural networks. Other mixup methods, such as \emph{manifold mixup} \citep{manifoldmixup} or \emph{CutMix} \citep{cutmix}, have also been proposed addressing different ways to mix a given pair of input data. \textit{Puzzle Mix} \citep{puzzlemix} utilizes saliency information and local statistics to ensure mixup data to have rich supervisory signals.

However, these approaches only consider mixing a given random pair of input data and do not fully utilize the rich informative supervisory signal in training data including collection of object saliency, relative arrangement, etc. In this work, we simultaneously consider mix-matching different salient regions among all input data so that each generated mixup example accumulates as many salient regions from multiple input data as possible while ensuring diversity among the generated mixup examples. 
To this end, we propose a novel optimization problem that maximizes the saliency measure of each individual mixup example while encouraging diversity among them collectively. This formulation results in a novel discrete submodular-supermodular objective. We also propose a practical modular approximation method for the supermodular term and present an efficient iterative submodular minimization algorithm suitable for minibatch-based mixup for neural network training. As illustrated in the \Cref{fig:toy_intuition}, while the proposed method, \emph{Co-Mixup}, mix-matches the collection of salient regions utilizing inter-arrangements among input data, the existing methods do not consider the saliency information (Input Mixup \& CutMix) or disassemble salient parts (Puzzle Mix). 

\begin{figure}
    \centering
    \vspace{-1em}
    \includegraphics[width=0.98\textwidth]{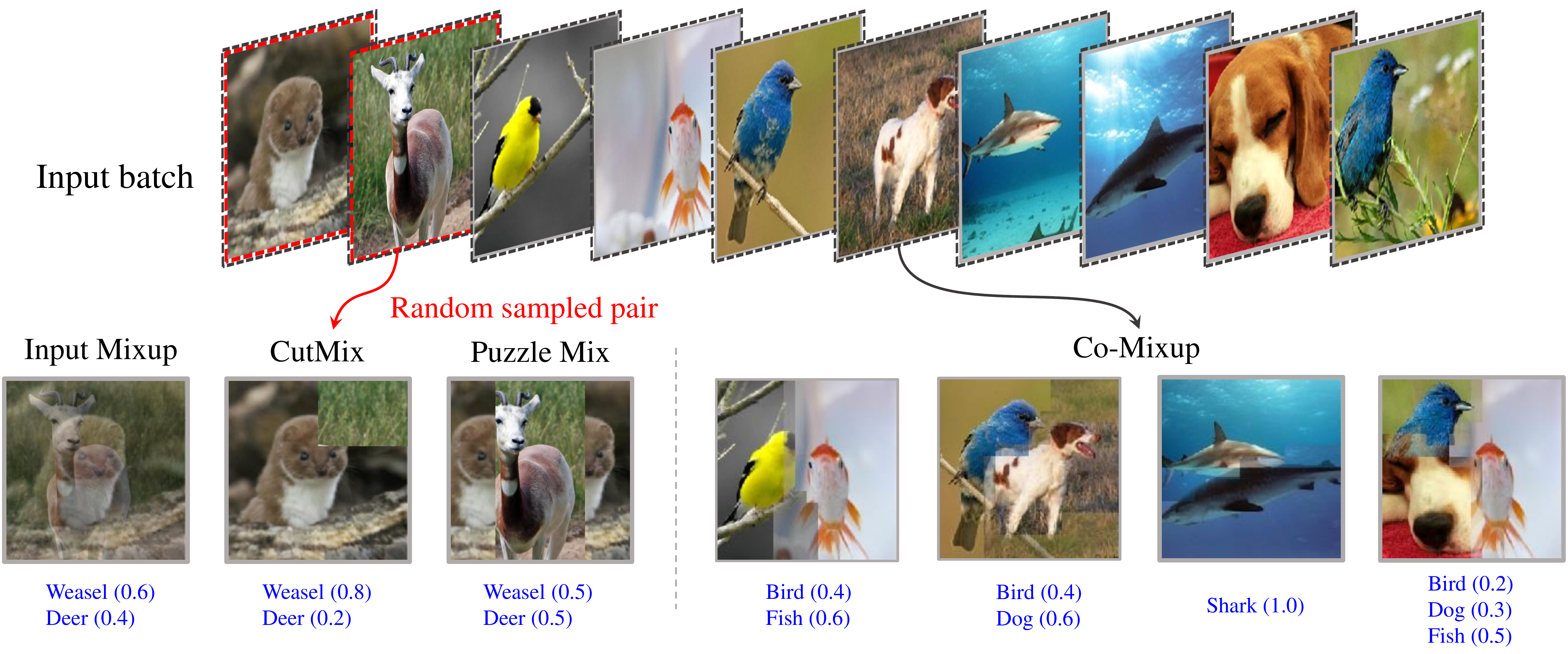}
    \vspace{-0.2cm}
    \caption{Example comparison of existing mixup methods and the proposed Co-Mixup. We provide more samples in \Cref{appen:figures}.}
    \vspace{-0.3cm}
    \label{fig:toy_intuition}
\end{figure}

We verify the performance of the proposed method by training classifiers on CIFAR-100, Tiny-ImageNet, ImageNet, and the Google commands dataset \citep{cifar,tiny-imagenet,imagenet,google_command}. Our experiments show the models trained with Co-Mixup achieve the state of the performance compared to other mixup baselines. In addition to the generalization experiment, we conduct weakly-supervised object localization and robustness tasks and confirm Co-Mixup outperforms other mixup baselines.

\section{Related works}\label{sec:rel}
\paragraph{Mixup}
Data augmentation has been widely used to prevent deep neural networks from over-fitting to the training data \citep{noise}. The majority of conventional augmentation methods generate new data by applying transformations depending on the data type or the target task \citep{autoaugment}. \citet{mixup} proposed \textit{mixup}, which can be independently applied to various data types and tasks, and improves generalization and robustness of deep neural networks. \textit{Input mixup} \citep{mixup} linearly interpolates between two input data and utilizes the mixed data with the corresponding soft label for training. Following this work, \textit{manifold mixup} \citep{manifoldmixup} applies the mixup in the hidden feature space, and \textit{CutMix} \citep{cutmix} suggests a spatial copy and paste based mixup strategy on images. \citet{guo_aaai} trains an additional neural network to optimize a mixing ratio. \textit{Puzzle Mix} \citep{puzzlemix} proposes a mixup method based on saliency and local statistics of the given data. In this paper, we propose a discrete optimization-based mixup method simultaneously finding the best combination of collections of salient regions among all input data while encouraging diversity among the generated mixup examples.
\vspace{-0.05cm}

\paragraph{Saliency}
The seminal work from \citet{saliency} generates a saliency map using a pre-trained neural network classifier without any additional training of the network. Following the work, measuring the saliency of data using neural networks has been studied to obtain a more precise saliency map \citep{saliency1, saliency2} or to reduce the saliency computation cost \citep{cam, gradcam}. The saliency information is widely applied to the tasks in various domains, such as object segmentation or speech recognition \citep{saliency_seg, saliency_speech}. 
\vspace{-0.05cm}

\paragraph{Submodular-Supermodular optimization}
A submodular (supermodular) function is a set function with diminishing (increasing) returns property \citep{bp}. It is known that any set function can be expressed as the sum of a submodular and supermodular function \citep{bp_theory}, called BP function. Various problems in machine learning can be naturally formulated as BP functions \citep{sub_app}, but it is known to be NP-hard \citep{bp_theory}. Therefore, approximate algorithms based on modular approximations of submodular or supermodular terms have been developed \citep{bp2}. Our formulation falls into a category of BP function consisting of smoothness function within a mixed output (submodular) and a diversity function among the mixup outputs (supermodular).

\section{Preliminary}\label{sec:prelim}
Existing mixup methods return $\{h(x_1, x_{i(1)}),\ldots, h(x_m, x_{i(m)})\}$ for given input data $\{x_1,\ldots,x_m\}$, where $h: \mathcal{X}\times \mathcal{X} \rightarrow \mathcal{X}$ is a mixup function and $(i(1),\ldots, i(m))$ is a random permutation of the data indices. In the case of input mixup, $h(x, x')$ is $\lambda x + (1-\lambda)x'$, where $\lambda \in [0,1]$ is a random mixing ratio. Manifold mixup applies input mixup in the hidden feature space, and CutMix uses $h(x,x') = \mathds{1}_B \odot x + (1-\mathds{1}_B)\odot x'$, where  $\mathds{1}_B$ is a binary rectangular-shape mask for an image $x$ and $\odot$ represents the element-wise product. Puzzle Mix defines $h(x,x')$ as $z \odot \Pi^\intercal x + (1-z)\odot \Pi'^\intercal x'$, where $\Pi$ is a transport plan and $z$ is a discrete mask. In detail, for $x\in \mathbb{R}^n$, $\Pi \in \{0,1\}^n$ and $z\in \mathcal{L}^n$ for $\mathcal{L}=\{\frac{l}{L} \mid l = 0, 1,\ldots, L\}$.

In this work, we extend the existing mixup functions as $h: \mathcal{X}^m \rightarrow \mathcal{X}^{m'}$ which performs mixup on a collection of input data and returns another collection. Let $x_B \in \mathbb{R}^{m\times n}$ denote the batch of input data in matrix form. Then, our proposed mixup function is
\begin{equation*}
    h(x_B) = \big(g(z_1\odot x_B), \ldots, g(z_{m'}\odot x_B)\big),
\end{equation*}
where $z_j \in \mathcal{L}^{m\times n}$ for $j=1,\ldots,m'$ with $\mathcal{L}=\{\frac{l}{L} \mid l = 0, 1, \ldots, L\}$ and $g:\mathbb{R}^{m\times n}\rightarrow \mathbb{R}^n$ returns a column-wise sum of a given matrix. Note that, the $k^{\text{th}}$ column of $z_j$, denoted as $z_{j,k}\in \mathcal{L}^m$, can be interpreted as the mixing ratio among $m$ inputs at the $k^{\text{th}}$ location. Also, we enforce $\|z_{j,k}\|_1 = 1$ to maintain the overall statistics of the given input batch. Given the one-hot target labels $y_B\in \{0,1\}^{m\times C}$ of the input data with $C$ classes, we generate soft target labels for mixup data as $y_B^\intercal \tilde{o}_j$ for $j=1,\ldots,m'$, where $\tilde{o}_j= \frac{1}{n}\sum_{k=1}^{n} z_{j,k} \in [0,1]^{m}$ represents the input source ratio of the $j^{\text{th}}$ mixup data. We train models to estimate the soft target labels by minimizing the cross-entropy loss.

\section{Method}\label{sec:method}
\subsection{Objective}\label{sec:obj}
\paragraph{Saliency}
Our main objective is to maximize the saliency measure of mixup data while maintaining the local smoothness of data, \textit{i.e.}, spatially nearby patches in a natural image look similar, temporally adjacent signals have similar spectrum in speech, etc. \citep{puzzlemix}. As we can see from CutMix in \Cref{fig:toy_intuition}, disregarding saliency can give a misleading supervisory signal by generating mixup data that does not match with the target soft label. While the existing mixup methods only consider the mixup between two inputs, we generalize the number of inputs $m$ to any positive integer.
Note, each $k^{\text{th}}$ location of outputs has $m$ candidate sources from the inputs. We model the unary labeling cost as the negative value of the saliency, and denote the cost vector at the $k^{\text{th}}$ location as $c_k\in \mathbb{R}^m$. For the saliency measure, we calculate the gradient values of training loss with respect to the input and measure $\ell_2$ norm of the gradient values across input channels \citep{saliency, puzzlemix}. Note that this method does not require any additional architecture dependent modules for saliency calculation. In addition to the unary cost, we encourage adjacent locations to have similar labels for the smoothness of each mixup data. In summary, the objective can be formulated as follows:
\begin{align*}
    \sum_{j=1}^{m'}\sum_{k=1}^{n} c_k^\intercal z_{j,k} 
    +  \beta\sum_{j=1}^{m'}\sum_{(k,k')\in \mathcal{N}} 
    (1 - z_{j,k}^\intercal z_{j,k'}) - \eta\sum_{j=1}^{m'}\sum_{k=1}^{n}\log{p(z_{j,k})},
\end{align*}
where the prior $p$ is given by $z_{j,k} \sim \frac{1}{L}Multi(L, \lambda)$ with $\lambda = (\lambda_1, \ldots, \lambda_m) \sim Dirichlet(\alpha, \ldots, \alpha)$, which is a generalization of the mixing ratio distribution of \cite{mixup}, and $\mathcal{N}$ denotes a set of adjacent locations (\textit{i.e.}, neighboring image patches in vision, subsequent spectrums in speech, etc.).

\paragraph{Diversity}
Note that the naive generalization above leads to the identical outputs because the objective is separable and identical for each output. In order to obtain diverse mixup outputs, we model a similarity penalty between outputs. First, we represent the input source information of the $j^{\text{th}}$ output by aggregating assigned labels as $\sum_{k=1}^{n} z_{j,k}$. For simplicity, let us denote $\sum_{k=1}^{n} z_{j,k}$ as $o_j$. Then, we measure the similarity between $o_j$'s by using the inner-product on $\mathbb{R}^{m}$. 
In addition to the input source similarity between outputs, we model the compatibility between input sources, represented as a symmetric matrix $A_c \in \mathbb{R}_{+}^{m\times m}$. Specifically, $A_c[i_1,i_2]$ quantifies the degree to which input $i_1$ and $i_2$ are suitable to be mixed together. In summary, we use inner-product on $A = (1-\omega) I + \omega A_c$ for $\omega\in [0,1]$, resulting in a supermodular penalty term. Note that, by minimizing $\langle o_j, o_{j'}\rangle_A = o_j^\intercal A o_{j'}$, $\forall j\ne j'$, we penalize output mixup examples with similar input sources and encourage each individual mixup examples to have high compatibility within. In this work, we measure the distance between locations of salient objects in each input and use the distance matrix $A_c[i,j]=\|\mathrm{argmax}_k s_i[k] - \mathrm{argmax}_k s_j[k]\|_1$, where $s_i$ is the saliency map of the $i^{\mathrm{th}}$ input and $k$ is a location index (\eg, $k$ is a 2-D index for image data). From now on, we denote this inner-product term as the \textit{compatibility} term.

\begin{figure}[t]
\vspace{-1em}
\usetikzlibrary{positioning,decorations.pathreplacing,decorations.markings,calc,arrows,pgfplots.groupplots}
\pgfplotsset{
  log y ticks with fixed point/.style={
      yticklabel={
        \pgfkeys{/pgf/fpu=true}
        \pgfmathparse{exp(\tick)}%
        \pgfmathprintnumber[fixed relative, precision=3]{\pgfmathresult}
        \pgfkeys{/pgf/fpu=false}
      }
  }
}

\begin{tikzpicture}[define rgb/.code={\definecolor{mycolor}{RGB}{#1}},
                    rgb color/.style={define rgb={#1},mycolor}]
\begin{groupplot}[
        group style={columns=4, horizontal sep=1.05cm, 
        vertical sep=0.0cm},
        ]

\nextgroupplot[
            width=4.0cm,
            height=4.0cm,
            no marks,
            every axis plot/.append style={thick},
            grid=major,
            scaled ticks = false,
            xmajorticks=false,
            xlabel near ticks,
            ylabel near ticks,
            tick pos=left,
            tick label style={font=\scriptsize},
            ytick={0.0, 0.5, 1.0, 1.5, 2.0, 2.5},
            yticklabels={0.0, 0.5, 1.0, 1.5, 2.0, 2.5},
            xlabel shift=0.0cm,         
            ylabel shift=-0.15cm,
            label style={font=\tiny},
            xlabel style={align=center},
            xlabel={Diverse\ \ \ \ \  $z^*$ \ \ \ \  Salient\\
            but not salient\ \ \ but not diverse},
            ylabel={\scriptsize{Objective value}},
            xmin=0,
            xmax=1,
            ymin=0.1,
            ymax=2.0,
            title style={at={(0.5,0)},anchor=north,yshift=-1.00cm},
            title = (a),
            ]
\node[rotate=27] at (rel axis cs:0.73,0.60) {\tiny{sum}};
\node[rotate=36] at (rel axis cs:0.72,0.45) {\tiny{supermodular}};
\node[rotate=-13] at (rel axis cs:0.74,0.14) {\tiny{unary}};

\addplot[red] table [x=x, y=unary, col sep=comma]{data/supermodular.csv};
\addplot[rgb color={0,150,0}] table [x=x, y=supermodular, col sep=comma]{data/supermodular.csv};
\addplot[brown] table [x=x, y=supermodular_unary, col sep=comma]{data/supermodular.csv};
\addplot[mark=none, black, dotted] coordinates {(0.5,0) (0.5,0.98)};

\nextgroupplot[%
            width=4.0cm,
            height=4.0cm,
    		grid=major,
    		no marks,
            every axis plot/.append style={thick},
            symbolic x coords={1, 2, 3, 4, 5, 6},
            xtick=data,     
            tick pos=left,
            tick label style={font=\scriptsize},
            xlabel shift=-0.18cm,           
            ylabel shift=-0.15cm,
            label style={font=\tiny},
            xlabel={\tiny{Number of mixed inputs}},
            ylabel={\scriptsize{Counts}},
            xlabel near ticks,
            ylabel near ticks,
            ybar,
            bar width=3pt,
            ymin=0,
            legend pos=north east,
            title style={at={(0.5,0)},anchor=north,yshift=-1.00cm},
            title = (b),
            ]
\addplot coordinates {
            (1, 19)
            (2, 73)
            (3, 8)
            (4, 0)
            (5, 0)
            (6, 0)
        };
\addlegendentry{\scriptsize{small $\tau$}}
\addplot coordinates {
            (1, 4)
            (2, 31)
            (3, 44)
            (4, 19)
            (5, 1)
            (6, 1)
        };
\addlegendentry{\scriptsize{large $\tau$}}

\nextgroupplot[
            width=4.0cm,
            height=4.0cm,
            no marks,
            every axis plot/.append style={thick},
            grid=major,
            scaled ticks = false,
            xmajorticks=false,
            xlabel near ticks,
            ylabel near ticks,
            tick pos=left,
            tick label style={font=\scriptsize},
            ytick={1.0, 1.1, 1.2, 1.3, 1.4, 1.5},
            yticklabels={1.0, 1.1, 1.2, 1.3, 1.4, 1.5},
            xlabel shift=0.04cm,
            ylabel shift=-0.16cm,
            label style={font=\tiny},
            ylabel={\scriptsize{Batch saliency}},
            xlabel style={align=center},
            xlabel={\, \, \ no-mix\ \, CutMix \ Co-Mixup\\
            \ \ \ \ Input \ PuzzleMix},
            nodes near coords align={vertical},
            every node near coord/.append style={font=\tiny, black},
            ybar,
            bar width=6pt,
            xmin=0,
            xmax=1,
            ymax=1.5,
            title style={at={(0.5,0)},anchor=north,yshift=-1.00cm},
            title=(c),
            ]
            
\node at (rel axis cs:0.1,0.16) {\tiny{1}};
\node at (rel axis cs:0.3,0.16) {\tiny{1}};
\node at (rel axis cs:0.5,0.16) {\tiny{1}};
\node at (rel axis cs:0.69,0.59) {\tiny{1.22}};
\node at (rel axis cs:0.89,0.82) {\tiny{1.36}};

\addplot coordinates {
             (0.1, 1.0)
             (0.3, 1.0)
             (0.5, 1.0)
             (0.7, 1.22)
             (0.9, 1.36)
         };

\nextgroupplot[
            width=4.0cm,
            height=4.0cm,
            no marks,
            every axis plot/.append style={thick},
            grid=major,
            scaled ticks = false,
            xlabel near ticks,
            ylabel near ticks,
            tick pos=left,
            tick label style={font=\scriptsize},
            xtick={0, 0.2, 0.4, 0.6, 0.8, 1.0},
            xticklabels={0, 0.2, 0.4, 0.6, 0.8, 1.0},
            ytick={0, 0.2, 0.4, 0.6, 0.8, 1.0},
            yticklabels={0, 0.2, 0.4, 0.6, 0.8, 1.0},
            xlabel shift=0.0cm,         
            ylabel shift=-0.15cm,
            label style={font=\scriptsize},
            xlabel style={align=center},
            xlabel={$\tau$},
            ylabel={Diversity},
            xmin=0,
            xmax=1,
            ymin=0.3,
            ymax=1.1,
            title style={at={(0.5,0)},anchor=north,yshift=-1.00cm},
            title = (d),
            ]
\addplot[red] table [x=tau, y=diversity, col sep=comma]{data/diversity.csv};
\addplot[black, dotted] table [x=tau, y=baseline, col sep=comma]{data/diversity.csv};
\node at (rel axis cs:0.25,0.33) {\tiny{baselines}};

\end{groupplot}
\end{tikzpicture}
\vspace{-0.5cm}
\caption{(a) Analysis of our BP optimization problem. The x-axis is a one-dimensional arrangement of solutions: The mixed output is more salient but not diverse towards the right and less salient but diverse on the left. The unary term (red) decreases towards the right side of the axis, while the supermodular term (green) increases. By optimizing the sum of the two terms (brown), we obtain the balanced output $z^*$.
(b) A histogram of the number of inputs mixed for each output given a batch of 100 examples from the ImageNet dataset. As $\tau$ increases, more inputs are used to create each output on average.
(c) Mean batch saliency measurement of a batch of mixup data using the ImageNet dataset. We normalize the saliency measure of each input to sum up to 1. 
(d) Diversity measurement of a batch of mixup data. We calculate the diversity as $1 - \sum_j \sum_{j'\neq j} \tilde{o}_j^\intercal \tilde{o}_{j'} / m$, where $\tilde{o}_j = o_j / \|o_j \|_1$. We can control the diversity among Co-Mixup data (red) and find the optimum by controlling $\tau$.
}
\label{fig:graph}
\end{figure}

\paragraph{Over-penalization}
The conventional mixup methods perform mixup as many as the number of examples in a given mini-batch. In our setting, this is the case when $m = m'$. However, the compatibility penalty between outputs is influenced by the pigeonhole principle. For example, suppose the first output consists of two inputs. Then, the inputs must be used again for the remaining $m'-1$ outputs, or only $m-2$ inputs can be used. In the latter case, the number of available inputs ($m-2$) is less than the outputs ($m'-1$), and thus, the same input must be used more than twice. Empirically, we found that the remaining compatibility term above over-penalizes the optimization so that a substantial portion of outputs are returned as singletons without any mixup. To mitigate the over-penalization issue, we apply clipping to the compatibility penalty term. Specifically, we model the objective so that no extra penalty would occur when the compatibility among outputs is below a certain level.

Now we present our main objective as following: 
\begin{align*}
    z^* = \argmin_{z_{j,k}\in \mathcal{L}^{m},~\norm{z_{j,k}}_1=1} f(z),
\end{align*}
where
\begin{align}\label{eq:main}
    f(z) 
    \coloneqq
    &\sum_{j=1}^{m'}\sum_{k=1}^{n} c_k^\intercal z_{j,k}
    +  \beta\sum_{j=1}^{m'}\sum_{(k,k')\in \mathcal{N}} 
    (1 - z_{j,k}^\intercal z_{j,k'}) \\
    &+ \gamma\underbrace{\max \left\{\tau,~ 
    \sum_{j=1}^{m'} \sum_{j'\neq j}^{m'}
     \left(\sum_{k=1}^{n} z_{j,k}\right)^\intercal A \left(\sum_{k=1}^{n} z_{j',k}\right) \right\}}_{=f_c(z)}
    - \eta\sum_{j=1}^{m'}\sum_{k=1}^{n}\log{p(z_{j,k})}. \nonumber
\end{align}

In \Cref{fig:graph}, we describe the properties of the BP optimization problem of \Cref{eq:main} and statistics of the resulting mixup data. Next, we verify the supermodularity of the compatibility term. We first extend the definition of the submodularity of a multi-label function as follows \citep{submodular_multilabel}. 

\begin{definition}
 For a given label set $\mathcal{L}$, a function $s: \mathcal{L}^m \times \mathcal{L}^m \rightarrow \mathbb{R}$ is pairwise submodular, if $\ \forall x, x' \in \mathcal{L}^m$, $s(x,x) + s(x',x') \le s(x,x') + s(x'
 ,x)$. A function $s$ is pairwise supermodular, if $-s$ is pairwise submodular.
\end{definition}

\begin{proposition}
    The compatibility term $f_c$ in \Cref{eq:main} is pairwise supermodular for every pair of $(z_{j_1,k}, z_{j_2,k})$ if A is positive semi-definite. 
\end{proposition}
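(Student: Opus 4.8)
The plan is to fix a location $k$ and two output indices $j_1 \neq j_2$, freeze every coordinate of $z$ except $x := z_{j_1,k}$ and $x' := z_{j_2,k}$, and verify the pairwise supermodularity inequality of the definition directly on the resulting two-variable restriction of $f_c$. Introduce the constants $r_1 := \sum_{k'\neq k} z_{j_1,k'}$, $r_2 := \sum_{k'\neq k} z_{j_2,k'}$ and $P := \sum_{j\notin\{j_1,j_2\}} o_j$, so that under this restriction $o_{j_1} = x + r_1$, $o_{j_2} = x' + r_2$, and $o_j$ is constant for $j\notin\{j_1,j_2\}$.

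First I would expand the inner bilinear form $T(z) := \sum_{j}\sum_{j'\neq j} o_j^\intercal A\, o_{j'}$ as a function of $(x,x')$. Splitting the double sum according to whether each index lies in $\{j_1,j_2\}$ and using $A = A^\intercal$ gives $T = 2\,o_{j_1}^\intercal A\, o_{j_2} + 2\,o_{j_1}^\intercal A\, P + 2\,o_{j_2}^\intercal A\, P + \mathrm{const}$; substituting $o_{j_1} = x + r_1$ and $o_{j_2} = x' + r_2$ then yields
\[
T(x,x') \;=\; 2\,x^\intercal A x' \;+\; 2\,x^\intercal A(r_2+P) \;+\; 2\,x'^\intercal A(r_1+P) \;+\; \mathrm{const}.
\]
The only term coupling the two variables is $2\,x^\intercal A x'$; the remaining two terms each depend on a single variable, hence are modular and may be pushed into the unary potentials without affecting pairwise supermodularity. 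Evaluating the supermodularity defect from the definition, the modular and constant pieces cancel and
\[
T(x,x) + T(x',x') - T(x,x') - T(x',x) \;=\; 2\,x^\intercal A x + 2\,x'^\intercal A x' - 4\,x^\intercal A x' \;=\; 2\,(x-x')^\intercal A\,(x-x'),
\]
which is $\ge 0$ for all $x,x'\in\mathcal{L}^m$ exactly because $A \succeq 0$ (indeed the inequality $x^\intercal A x + x'^\intercal A x' \ge 2\,x^\intercal A x'$ used here is precisely positive semidefiniteness of $A$). Thus $T$, and its nonnegative scaling $\gamma T$, is pairwise supermodular in $(z_{j_1,k},z_{j_2,k})$.

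It remains to lift supermodularity through the outer clipping $f_c = \max\{\tau, T\}$: since $f_c$ is obtained from the supermodular $T$ by truncating from below by the constant $\tau$, and truncation from below by a constant preserves supermodularity (dually to the standard fact that $\min\{\cdot,c\}$ preserves submodularity), $f_c$ is pairwise supermodular for every such pair, which is the claim. The step I expect to be the most delicate is precisely this last one: unlike the bilinear and modular pieces, the $\max$ is not separable across outputs, so one must check that the two-point supermodularity inequality survives the interaction between the truncation and the asymmetric modular terms left inside the $\max$ — the cleanest route is to establish supermodularity of the unclipped global objective $T$ first and then invoke the truncation lemma there. The only other place requiring care, though purely mechanical, is the index bookkeeping in the expansion of $T$.
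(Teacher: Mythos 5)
Your route is structurally the same as the paper's: establish pairwise supermodularity of the unclipped bilinear form via the identity $s(x,x)+s(x',x')-s(x,x')-s(x',x)=2(x-x')^\intercal A(x-x')\ge 0$ (this is exactly the paper's Lemma~1), and then argue that the outer $\max\{\tau,\cdot\}$ preserves the property. Your expansion of the double sum is in fact more careful than the paper's: the paper writes the whole sum as $c+2z_{j_1,k}^\intercal A z_{j_2,k}$ ``for $\exists c\in\mathbb{R}$,'' silently absorbing into the ``constant'' the cross terms $2z_{j_1,k}^\intercal A(r_2+P)$ and $2z_{j_2,k}^\intercal A(r_1+P)$ that you correctly keep as modular pieces depending on the two variables.

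However, the step you yourself flag as delicate is a genuine gap, and it does not close the way you hope. Truncation by a constant preserves (super)modularity only under an additional monotonicity or symmetry hypothesis; for a pairwise function $T(u,v)=2u^\intercal Av+m(u)+m'(v)+\mathrm{const}$ with $m\neq m'$, the four-point inequality can be destroyed by the $\max$. Concretely, take two inputs, $A=I$, two outputs, $n=6$ locations, fix $z_{j_1,k'}$ to the first one-hot vector and $z_{j_2,k'}$ to the second for all $k'\neq k$, and let $x,x'$ be the first and second one-hot vectors. Then $T(x,x)=T(x',x')=12$, $T(x,x')=0$, $T(x',x)=20$, so $T$ itself satisfies $24\ge 20$; but with $\tau=13$ one gets $\max\{\tau,T(x,x)\}+\max\{\tau,T(x',x')\}=26<33=\max\{\tau,T(x,x')\}+\max\{\tau,T(x',x)\}$, violating pairwise supermodularity of the clipped $f_c$. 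The paper avoids rather than resolves this: once the modular terms are (incorrectly) treated as constants, $s(x,x')=s(x',x)$ and $\max\{\tau,c+\cdot\}$ is a convex nondecreasing transform of a symmetric supermodular kernel, for which the four-point inequality does survive by Jensen plus monotonicity. So your proof is incomplete exactly where you suspected, and no generic truncation lemma can repair it; one must either restrict the claim to the symmetric kernel $2z_{j_1,k}^\intercal A z_{j_2,k}$ (as the paper's argument effectively does) or impose conditions on $\tau$ under which the clip cannot be partially active across the four evaluation points.
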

\vspace{-0.4cm}
\begin{proof}
    See \Cref{subsec:prop1}.
\end{proof}

Finally note that, $A= (1-\omega) I + \omega A_c$, where $A_c$ is a symmetric matrix. By using spectral decomposition, $A_c$ can be represented as $UDU^\intercal$, where $D$ is a diagonal matrix and $U^\intercal U = U U^\intercal = I$. Then, $A = U((1-\omega) I + \omega D)U^\intercal$, and thus for small $\omega>0$, we can guarantee $A$ to be positive semi-definite.  

\subsection{Algorithm}
{Our main objective consists of modular (\textit{unary, prior}), submodular (\textit{smoothness}), and supermodular (\textit{compatibility}) terms. To optimize the main objective, we employ the submodular-supermodular procedure by iteratively approximating the supermodular term as a modular function \citep{bp}. Note that $z_j$ represents the labeling of the $j^{\text{th}}$ output and $o_j$ represents the aggregated input source information of the $j^{\text{th}}$ output, $\sum_{k=1}^{n} z_{j,k}$. Before introducing our algorithm, we first inspect the simpler case without clipping.}\par

\begin{proposition}\label{prop:compat}
    The compatibility term $f_c$ without clipping is modular with respect to $z_j$.
\end{proposition}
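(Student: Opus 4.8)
The plan is to fix an output index $j$ and show that, with the other outputs $z_{j'}$ ($j'\neq j$) held fixed, $f_c$ (without clipping) decomposes as a sum of terms each depending on a single block $z_{j,k}$, which is exactly what modularity requires. First I would isolate the dependence of $f_c$ on $o_j=\sum_{k=1}^{n} z_{j,k}$. Renaming the dummy indices, $f_c(z)=\sum_{j_1}\sum_{j_2\neq j_1} o_{j_1}^\intercal A\, o_{j_2}$, and the only terms containing $o_j$ are those with $j_1=j$, contributing $\sum_{j_2\neq j} o_j^\intercal A\, o_{j_2}$, and those with $j_2=j$, contributing $\sum_{j_1\neq j} o_{j_1}^\intercal A\, o_j$; the key point is that the diagonal term $j_1=j_2=j$ is excluded by the constraint $j'\neq j$ in \Cref{eq:main}, so no quadratic-in-$o_j$ contribution survives. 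Using the symmetry of $A$, the two surviving groups coincide, so
\[
    f_c(z) \;=\; 2\, o_j^\intercal A \Big(\textstyle\sum_{j'\neq j} o_{j'}\Big) \;+\; (\text{terms independent of } z_j).
\]

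Next I would substitute $o_j=\sum_{k=1}^{n} z_{j,k}$ and define the fixed vector $v_j \coloneqq A\sum_{j'\neq j} o_{j'}\in\mathbb{R}^m$, which does not depend on $z_j$. This gives
\[
    f_c(z) \;=\; \sum_{k=1}^{n} \big\langle 2 v_j,\; z_{j,k}\big\rangle \;+\; \mathrm{const},
\]
a sum of per-location affine functions of the individual blocks $z_{j,k}$, with no interaction between $z_{j,k}$ and $z_{j,k'}$ for $k\neq k'$ — the canonical form of a modular function. To phrase this in terms of the pairwise definition stated above, I would fix any two locations $k\neq k'$ and any $a,a'\in\mathcal{L}^m$, let $s(a,a')$ denote the value of $f_c$ with $z_{j,k}=a$, $z_{j,k'}=a'$, and everything else fixed, note that $s(a,a')=\langle 2v_j,a\rangle+\langle 2v_j,a'\rangle+\mathrm{const}$, and verify $s(a,a)+s(a',a')=s(a,a')+s(a',a)$, i.e.\ equality holds in both the submodular and the supermodular inequality. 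Since $(k,k')$ was arbitrary, $f_c$ is modular with respect to $z_j$.

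The calculation is routine; the only care needed is the bookkeeping of which index pairs survive, and the whole content of the statement really is the single structural observation that the self-interaction $o_j^\intercal A\, o_j$ is absent precisely because of the $j'\neq j$ restriction — otherwise the dependence on $z_j$ would be genuinely quadratic (supermodular, by the preceding proposition, but not modular). I would also emphasize that this modularity holds only with respect to one $z_j$ at a time: the cross terms $o_j^\intercal A\, o_{j'}$ are bilinear, so $f_c$ is not jointly modular in the full variable $z$, which is exactly why the block-coordinate submodular–supermodular procedure is the natural algorithmic choice here.
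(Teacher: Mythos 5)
Your proof is correct and follows essentially the same route as the paper's: isolate the terms of $f_c$ containing $o_j$, use the symmetry of $A$ and the exclusion of the diagonal term to collapse them into a linear form $v^\intercal o_j$ plus a constant, and then expand $o_j=\sum_k z_{j,k}$ to exhibit $f_c$ as a sum of per-location linear terms. Your added remarks (explicit verification of the pairwise definition and the observation that modularity holds only blockwise in $z_j$) are correct elaborations rather than a different argument.
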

\vspace{-0.4cm}
\begin{proof}
     Note, $A$ is a positive symmetric matrix by the definition. Then, for an index $j_0$, we can represent $f_c$ without clipping in terms of $o_{j_0}$ as $ \sum_{j=1}^{m'} \sum_{j'=1, j'\neq j}^{m'} o_j^\intercal A o_{j'} 
    = 2\sum_{j=1, j\neq {j_0}}^{m'} o_{j}^\intercal A o_{j_0} 
    + \sum_{j=1, j\neq {j_0}}^{m'} \sum_{j'=1, j'\notin \{{j_0},j\}}^{m'} o_{j}^\intercal A o_{j'} 
    = (2\sum_{j=1, j\neq {j_0}}^{m'} A o_{j})^\intercal  o_{j_0} + c 
    = v_{\text{-}j_0}^\intercal  o_{j_0} + c $, where $v_{\text{-}j_0}\in \mathbb{R}^m$ and $c\in \mathbb{R}$ are values independent with $o_{j_0}$.
    Finally, $v_{\text{-}j_0}^\intercal  o_{j_0} + c =\sum_{k=1}^{n} v_{\text{-}j_0}^\intercal z_{j_0,k} + c$ is a modular function of $z_{j_0}$.
\end{proof}

By \Cref{prop:compat}, we can apply a submodular minimization algorithm to optimize the objective with respect to $z_j$ when there is no clipping. Thus, we can optimize the main objective without clipping in coordinate descent fashion \citep{cd}. For the case with clipping, we modularize the supermodular compatibility term under the following criteria: 
\vspace{-0.1cm}
\begin{enumerate}\itemsep=1pt
\item The modularized function value should increase as the compatibility across outputs increases.
\item The modularized function should not apply an extra penalty for the compatibility below a certain level.
\end{enumerate}
\vspace{-0.1cm}

Borrowing the notation from the proof in \Cref{prop:compat}, for an index $j$, $f_c(z)= \max\{\tau, v_{\text{-}j}^\intercal o_j + c\} = \max\{\tau - c, v_{\text{-}j}^\intercal o_j\} + c$. Note, $o_j = \sum_{k=1}^{n} z_{j,k}$ represents the input source information of the $j^{\text{th}}$ output and $v_{\text{-}j}=2\sum_{j'=1, j'\neq j}^{m'} A o_{j'}$ encodes the status of the other outputs. Thus, we can interpret the supermodular term as a penalization of each label of $o_j$ in proportion to the corresponding $v_{\text{-}j}$ value (criterion 1), but not for the compatibility below $\tau-c$ (criterion 2). 
As a modular function which satisfies the criteria above, we use the following function: 
\begin{equation}\label{eq:mod}
    f_{c}(z) \approx \max\{\tau', v_{\text{-}j}\}^\intercal o_j\quad \text{for}\ \exists \tau' \in \mathbb{R}.
\end{equation}
Note that, by satisfying the criteria above, the modular function reflects the diversity and over-penalization desiderata described in \Cref{sec:obj}. We illustrate the proposed mixup procedure with the modularized diversity penalty in \Cref{fig:obj}.

\begin{figure}
  \input{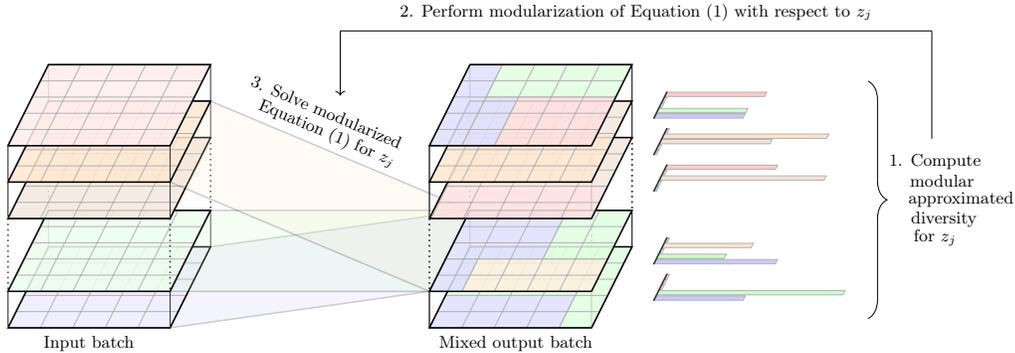}
  \vspace{-1em}
  \caption{Visualization of the proposed mixup procedure. For a given batch of input data (left), a batch of mixup data (right) is generated, which mix-matches different salient regions among the input data while preserving the diversity among the mixup examples. The histograms on the right represent the input source information of each mixup data ($o_j$).
  }
  \label{fig:obj}
  \vspace{-0.3cm}
\end{figure}

\begin{proposition}
    The modularization given by \Cref{eq:mod} satisfies the criteria above.
\end{proposition}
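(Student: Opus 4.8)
The plan is to turn the two informal desiderata into precise statements about the modular surrogate
$g_j(o_j) \coloneqq \max\{\tau', v_{-j}\}^\intercal o_j$ (componentwise maximum), where, following the proof of \Cref{prop:compat}, $v_{-j} = 2\sum_{j'\neq j} A o_{j'}$, $o_j = \sum_{k=1}^{n} z_{j,k}$, and $A = (1-\omega)I + \omega A_c$. First I would record the two sign facts that drive everything: (i) each $z_{j',k}\in\mathcal{L}^m$ has non-negative entries, so $o_{j'}\ge 0$ entrywise, and $A$ has non-negative entries since $A_c\in\mathbb{R}_+^{m\times m}$ and $\omega\in[0,1]$; hence $v_{-j}\ge 0$ entrywise. (ii) $\|o_j\|_1 = \sum_{k=1}^{n}\|z_{j,k}\|_1 = n$. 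Writing $g_j(o_j) = \sum_{i=1}^m w_i (o_j)_i$ with $w_i = \max\{\tau',(v_{-j})_i\}$, note also that $g_j(o_j) = \sum_{k=1}^{n}\max\{\tau',v_{-j}\}^\intercal z_{j,k}$, so $g_j$ is genuinely modular in $z_j$, matching the discussion preceding \Cref{eq:mod}.

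For criterion 1, I would argue that each label $i$ of $o_j$ is charged the marginal cost $w_i = \max\{\tau',(v_{-j})_i\}$, which is a non-decreasing function of $(v_{-j})_i$ — and $(v_{-j})_i$ is precisely the rate at which the compatibility penalty grows in the usage of input $i$ by the other outputs. Consequently, shifting output $j$'s mass toward inputs that overlap the other outputs (i.e.\ raising the true compatibility $v_{-j}^\intercal o_j$ by moving mass to large-$(v_{-j})_i$ coordinates) never decreases $g_j$; likewise $g_j$ is entrywise non-decreasing in $v_{-j}$ for fixed $o_j$, since all $w_i\ge (v_{-j})_i\ge 0$. Either reading gives "the modularized value increases as the compatibility across outputs increases," and in particular it penalizes each label of $o_j$ in proportion to $v_{-j}$.

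For criterion 2, I would exhibit the witness $\tau' = (\tau - c)/n$ and show that whenever the remaining outputs are such that $(v_{-j})_i \le \tau'$ for every $i$, then $w_i = \tau'$ for all $i$, so $g_j(o_j) = \tau'\|o_j\|_1 = \tau - c$, a constant independent of the choice of $z_j$: no extra penalty is applied. This regime is exactly consistent with the clipping in \Cref{eq:main}, because $(v_{-j})_i\le\tau'$ for all $i$ together with $o_j\ge 0$ and $\|o_j\|_1=n$ forces $v_{-j}^\intercal o_j \le \tau' n = \tau - c$, i.e.\ the unmodularized term $\max\{\tau-c,\,v_{-j}^\intercal o_j\}+c$ is also saturated at its floor $\tau$. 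More generally, $g_j(o_j) = \tau' n + \sum_{i:\,(v_{-j})_i>\tau'}\big((v_{-j})_i - \tau'\big)(o_j)_i$, so the only "extra" penalty over the flat baseline $\tau' n$ is contributed by coordinates whose usage by the other outputs already exceeds the threshold, and none below it; combining this with the monotonicity of the previous paragraph completes the proof.

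I expect the main obstacle to be the first step rather than any computation: the two criteria are stated informally, so the real content is selecting the formalization — in particular the value of $\tau'$ and the per-coordinate reading of "compatibility below a certain level" — under which both properties hold simultaneously while staying faithful to the scalar clipping threshold $\tau - c$ of \Cref{eq:main}. Once that formalization and the sign facts $A\ge 0$, $o_{j'}\ge 0$, $\|o_j\|_1 = n$ are fixed, each verification is a one-line argument.
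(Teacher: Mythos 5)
Your proof is correct and follows essentially the same route as the paper's: criterion~1 via the fact that the componentwise $\max\{\tau',\cdot\}$ preserves the ordering of $v_{\text{-}j}$ under mass shifts between coordinates of $o_j$, and criterion~2 via the flat baseline $\max\{\tau',v_{\text{-}j}\}^\intercal o_j = \tau' n$ when $o_j$ only touches coordinates below the threshold, using $\|o_j\|_1=n$. Your explicit choice $\tau'=(\tau-c)/n$ and the decomposition $g_j(o_j)=\tau' n+\sum_{i:(v_{\text{-}j})_i>\tau'}\big((v_{\text{-}j})_i-\tau'\big)(o_j)_i$ are welcome refinements (the paper leaves $\tau'$ as an unspecified existential), but they do not change the underlying argument.
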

\begin{proof}
    See \Cref{subsec:prop3}.
\end{proof}

\begin{wrapfigure}[15]{R}{0.5\textwidth}
\begin{minipage}{0.5\textwidth}
  \vspace{-0.8cm}
  \begin{algorithm}[H]
    \caption{Iterative submodular minimization}
    \label{alg:main}
    \begin{algorithmic}
    \STATE Initialize $z$ as $z^{(0)}$.
    \STATE Let $z^{(t)}$ denote a solution of the $t^{\text{th}}$ step.
    \STATE  $\Phi$: modularization operator based on \Cref{eq:mod}.   
    \FOR{$t=1, \ldots,T$}
    \FOR{$ j= 1, \ldots, m'$}
    \STATE $f_{j}^{(t)}(z_j) \coloneqq f(z_j; z_{1:j-1}^{(t)}, z_{j+1:m'}^{(t-1)})$.
    \STATE $\tilde{f}_{j}^{(t)} = \Phi(f_{j}^{(t)})$.
    \STATE Solve $z_{j}^{(t)} = \argmin \tilde{f}_{j}^{(t)}(z_j)$.
    \ENDFOR    
    \ENDFOR  
    \RETURN $z^{(T)}$
    \end{algorithmic}
  \end{algorithm}
\end{minipage}
\end{wrapfigure}

By applying the modular approximation described in \Cref{eq:mod} to $f_c$ in \Cref{eq:main}, we can iteratively apply a submodular minimization algorithm to obtain the final solution as described in \Cref{alg:main}. In detail, each step can be performed as follows: 1) Conditioning the main objective $f$ on the current values except $z_j$, denoted as $f_j(z_j) = f(z_j; z_{1:j-1}, z_{j+1:m'})$. 2) Modularization of the compatibility term of $f_j$ as \Cref{eq:mod}, resulting in a submodular function $\tilde{f}_j$. We denote the modularization operator as $\Phi$, \textit{i.e.}, $\tilde{f}_j = \Phi(f_j)$. 3) Applying a submodular minimization algorithm to $\tilde{f}_j$. Please refer to \Cref{appen:implementation} for implementation details.

\paragraph{Analysis} 
\citet{bp} proposed a modularization strategy for general supermodular set functions, and apply a submodular minimization algorithm that can monotonically decrease the original BP objective. However, the proposed \Cref{alg:main} based on \Cref{eq:mod} is much more suitable for minibatch based mixup for neural network training than the set modularization proposed by \citet{bp} in terms of complexity and modularization variance due to randomness. For simplicity, let us assume each $z_{j,k}$ is an $m$-dimensional one-hot vector. Then, our problem is to optimize $m'n$ one-hot $m$-dimensional vectors.

To apply the set modularization method, we need to assign each possible value of $z_{j,k}$ as an element of $\{1,2, \ldots, m\}$. Then the supermodular term in \Cref{eq:main} can be interpreted as a set function with $m'nm$ elements, and to apply the set modularization, $O(m'nm)$ sequential evaluations of the supermodular term are required. In contrast, \Cref{alg:main} calculates $v_{\text{-}j}$ in \Cref{eq:mod} in only $O(m')$ time per each iteration.
In addition, each modularization step of the set modularization method requires a random permutation of the $m'nm$ elements. In this case, the optimization can be strongly affected by the randomness from the permutation step. As a result, the optimal labeling of each $z_{j, k}$ from the compatibility term is strongly influenced by the random ordering undermining the interpretability of the algorithm. Please refer to \Cref{appen:analysis} for empirical comparison between \Cref{alg:main} and the method by \citet{bp}.

\section{Experiments}
We evaluate our proposed mixup method on generalization, weakly supervised object localization, calibration, and robustness tasks. First, we compare the generalization performance of the proposed method against baselines by training classifiers on CIFAR-100 \citep{cifar}, Tiny-ImageNet \citep{tiny-imagenet}, ImageNet \citep{imagenet}, and the Google commands speech dataset \citep{google_command}.
Next, we test the localization performance of classifiers following the evaluation protocol of \citet{infocam}.
We also measure calibration error \citep{calibration} of classifiers to verify Co-Mixup successfully alleviates the over-confidence issue by \citet{mixup}. In \Cref{exp:robust}, we evaluate the robustness of the classifiers on the test dataset with background corruption in response to the recent problem raised by \citet{random_net} that deep neural network agents often fail to generalize to unseen environments. Finally, we perform a sensitivity analysis of Co-Mixup and provide the results in \Cref{appen:sensitivity}.

\subsection{Classification}
We first train PreActResNet18 \citep{preactresnet}, WRN16-8 \citep{wrn}, and ResNeXt29-4-24 \citep{resnext} on CIFAR-100 for 300 epochs. We use stochastic gradient descent with an initial learning rate of 0.2 decayed by factor 0.1 at epochs 100 and 200. We set the momentum as 0.9 and add a weight decay of 0.0001. With this setup, we train a vanilla classifier and reproduce the mixup baselines \citep{mixup, manifoldmixup, cutmix, puzzlemix}, which we denote as \textit{Vanilla, Input, Manifold, CutMix, Puzzle Mix} in the experiment tables. Note that we use identical hyperparameters regarding Co-Mixup over all of the experiments with different models and datasets, which are provided in \Cref{appen:hyper}.

\Cref{tab:classification} shows Co-Mixup significantly outperforms all other baselines in Top-1 error rate. Co-Mixup achieves 19.87\% in Top-1 error rate with PreActResNet18, outperforming the best baseline by 0.75\%. We further test Co-Mixup on different models (WRN16-8 \& ResNeXt29-4-24) and verify Co-Mixup improves Top-1 error rate over the best performing baseline.

\begin{table}[h!]
\centering
\scriptsize{
\begin{tabular}{lcccccc}
\toprule[1pt]
Dataset (Model) & Vanilla & Input & Manifold & CutMix & Puzzle Mix & Co-Mixup  \\
\midrule                                   
CIFAR-100 (PreActResNet18) & 23.59 & 22.43 & 21.64 & 21.29 & 20.62 & \textbf{19.87} \\
CIFAR-100 (WRN16-8) & 21.70 & 20.08 & 20.55 & 20.14 & 19.24 & \textbf{19.15} \\
CIFAR-100 (ResNeXt29-4-24) & 21.79 & 21.70 & 22.28 & 21.86 & 21.12 & \textbf{19.78} \\
Tiny-ImageNet (PreActResNet18) & 43.40 & 43.48 & 40.76 & 43.11 & 36.52 & \textbf{35.85} \\
ImageNet (ResNet-50, 100 epochs) & 24.03 & 22.97 & 23.30 & 22.92 & 22.49 & \textbf{22.39}
\\
Google commands (VGG-11) & $~$4.84 & $~$3.91 & $~$3.67 & $~$3.76 & $~$3.70 & \textbf{$~$3.54} \\
\bottomrule[1pt]
\end{tabular}
}
\caption{Top-1 error rate on various datasets and models. For CIFAR-100, we train each model with three different random seeds and report the mean error. 
}
\label{tab:classification}
\end{table}

We further test Co-Mixup on other datasets; Tiny-ImageNet, ImageNet, and the Google commands dataset (\Cref{tab:classification}). For Tiny-ImageNet, we train PreActResNet18 for 1200 epochs following the training protocol of  \citet{puzzlemix}. As a result, Co-Mixup consistently improves Top-1 error rate over baselines by 0.67\%. In the ImageNet experiment, we follow the experimental protocol provided in Puzzle Mix \citep{puzzlemix}, which trains ResNet-50 \citep{resnet} for 100 epochs. As a result, Co-Mixup outperforms all of the baselines in Top-1 error rate. We further test Co-Mixup on the speech domain with the Google commands dataset and VGG-11 \citep{vgg}. We provide a detailed experimental setting and dataset description in \Cref{appen:speech}. From \Cref{tab:classification}, we confirm that Co-Mixup is the most effective in the speech domain as well.

\subsection{Localization}
We compare weakly supervised object localization (WSOL) performance of classifiers trained 
on ImageNet (in \Cref{tab:classification}) to demonstrate that our mixup method better guides a classifier to focus on salient regions. We test the localization performance using CAM \citep{cam}, a WSOL method using a pre-trained classifier. We evaluate localization performance following the evaluation protocol in \citet{infocam}, with binarization threshold 0.25 in CAM. \Cref{tab:localization-imagenet} summarizes the WSOL performance of various mixup methods, which shows that our proposed mixup method outperforms other baselines.

\subsection{Calibration}\label{exp:calibration}
We evaluate the expected calibration error (ECE) \citep{calibration} of classifiers trained on CIFAR-100. Note, ECE is calculated by the weighted average of the absolute difference between the confidence and accuracy of a classifier. As shown in \Cref{tab:localization-imagenet}, the Co-Mixup classifier has the lowest calibration error among baselines. From \Cref{fig:calibration_cifar_main}, we find that other mixup baselines tend to have \textit{under-confident} predictions resulting in higher ECE values even than \textit{Vanilla} network (also pointed out by \citet{calibration_under}), whereas Co-Mixup has best-calibrated predictions resulting in relatively 48\% less ECE value. We provide more figures and results with other datasets in \Cref{appen:calibration}.

\begin{table}[h] 
    \small{
    \centering
    \begin{tabular}{lcccccc}
    \toprule[1pt]
    Task & Vanilla & Input & Manifold & CutMix & Puzzle Mix & Co-Mixup \\
    \midrule
    Localization (Acc. \%) ($\uparrow$) & 54.36 & 55.07 & 54.86 & 54.91 & 55.22 & \textbf{55.32} \\ 
    Calibration\hspace{0.37em} (ECE \%) ($\downarrow$) & 3.9 & 17.7 & 13.1 & 5.6 & 7.5 & \textbf{1.9} \\
    \bottomrule[1pt]
    \end{tabular}
    }
    \caption{WSOL results on ImageNet and ECE (\%) measurements of CIFAR-100 classifiers.} 
    \label{tab:localization-imagenet}
    \vspace{-0.2cm}
\end{table}

\begin{figure}[t]
\hspace{-0.2cm}
\begin{tikzpicture}[define rgb/.code={\definecolor{mycolor}{RGB}{#1}},
                    rgb color/.style={define rgb={#1},mycolor}]
\begin{groupplot}[
        group style={columns=6, horizontal sep=0.27cm, 
        vertical sep=0.0cm},
        ]

\nextgroupplot[
            width=3.6cm,
            height=3.6cm,
            no marks,
            every axis plot/.append style={thick},
            grid=major,
            scaled ticks = false,
            xmajorticks=false,
            ymajorticks=false,
            xlabel near ticks,
            ylabel near ticks,
            tick pos=left,
            tick label style={font=\scriptsize},
            xtick={0, 0.2, 0.4, 0.6, 0.8, 1.0},
            ytick={0, 0.2, 0.4, 0.6, 0.8, 1.0},
            xlabel shift=0.0cm,         
            ylabel shift=0.0cm,
            label style={font=\scriptsize},
            xlabel style={align=center},
            xlabel={Confidence},
            ylabel={Accuracy},
            xmin=0,
            xmax=1,
            ymin=0.,
            ymax=1.,
            title style={at={(0.5, 1)},anchor=south, yshift=-0.2cm},
            title = Vanilla,
            title style = {font=\small},
            clip=false ,
            ]
            
\node at (rel axis cs: 0., -0.06) {\scriptsize 0};
\node at (rel axis cs: 1., -0.06) {\scriptsize 1};
\node at (rel axis cs: -0.06, 1.) {\scriptsize 1};

\addplot[red, dotted] table [x=base_confi, y=base_acc, col sep=comma]{data/ECE_CIFAR100.csv};
\addplot[blue] table [x=vanilla_confi, y=vanilla_acc, col sep=comma]{data/ECE_CIFAR100.csv};

\nextgroupplot[
            width=3.6cm,
            height=3.6cm,
            no marks,
            every axis plot/.append style={thick},
            grid=major,
            scaled ticks = false,
            xmajorticks=false,
            ymajorticks=false,
            xlabel near ticks,
            ylabel near ticks,
            tick pos=left,
            tick label style={font=\scriptsize},
            xtick={0, 0.2, 0.4, 0.6, 0.8, 1.0},
            ytick={0, 0.2, 0.4, 0.6, 0.8, 1.0},
            xlabel shift=0.0cm,         
            ylabel shift=0.0cm,
            label style={font=\scriptsize},
            xlabel style={align=center},
            xmin=0,
            xmax=1,
            ymin=0.,
            ymax=1.,
            title style={at={(0.5, 1)},anchor=south, yshift=-0.2cm},
            title = Input,
            title style = {font=\small},
            clip=false ,
            ]
            
\node at (rel axis cs: 0., -0.06) {\scriptsize 0};
\node at (rel axis cs: 1., -0.06) {\scriptsize 1};
\node at (rel axis cs: -0.06, 1.) {\scriptsize 1};

\addplot[red, dotted] table [x=base_confi, y=base_acc, col sep=comma]{data/ECE_CIFAR100.csv};
\addplot[blue] table [x=input_confi, y=input_acc, col sep=comma]{data/ECE_CIFAR100.csv};

\nextgroupplot[
            width=3.6cm,
            height=3.6cm,
            no marks,
            every axis plot/.append style={thick},
            grid=major,
            scaled ticks = false,
            xmajorticks=false,
            ymajorticks=false,
            xlabel near ticks,
            ylabel near ticks,
            tick pos=left,
            tick label style={font=\scriptsize},
            xtick={0, 0.2, 0.4, 0.6, 0.8, 1.0},
            ytick={0, 0.2, 0.4, 0.6, 0.8, 1.0},
            xlabel shift=0.0cm,         
            ylabel shift=0.0cm,
            label style={font=\scriptsize},
            xlabel style={align=center},
            xmin=0,
            xmax=1,
            ymin=0.,
            ymax=1.,
            title style={at={(0.5, 1)},anchor=south, yshift=-0.2cm},
            title = Manifold,
            title style = {font=\small},
            clip=false ,
            ]
            
\node at (rel axis cs: 0., -0.06) {\scriptsize 0};
\node at (rel axis cs: 1., -0.06) {\scriptsize 1};
\node at (rel axis cs: -0.06, 1.) {\scriptsize 1};

\addplot[red, dotted] table [x=base_confi, y=base_acc, col sep=comma]{data/ECE_CIFAR100.csv};
\addplot[blue] table [x=manifold_confi, y=manifold_acc, col sep=comma]{data/ECE_CIFAR100.csv};

\nextgroupplot[
            width=3.6cm,
            height=3.6cm,
            no marks,
            every axis plot/.append style={thick},
            grid=major,
            scaled ticks = false,
            xmajorticks=false,
            ymajorticks=false,
            xlabel near ticks,
            ylabel near ticks,
            tick pos=left,
            tick label style={font=\scriptsize},
            xtick={0, 0.2, 0.4, 0.6, 0.8, 1.0},
            ytick={0, 0.2, 0.4, 0.6, 0.8, 1.0},
            xlabel shift=0.0cm,         
            ylabel shift=0.0cm,
            label style={font=\scriptsize},
            xlabel style={align=center},
            xmin=0,
            xmax=1,
            ymin=0.,
            ymax=1.,
            title style={at={(0.5, 1)},anchor=south, yshift=-0.2cm},
            title = CutMix,
            title style = {font=\small},
            clip=false ,
            ]
            
\node at (rel axis cs: 0., -0.06) {\scriptsize 0};
\node at (rel axis cs: 1., -0.06) {\scriptsize 1};
\node at (rel axis cs: -0.06, 1.) {\scriptsize 1};

\addplot[red, dotted] table [x=base_confi, y=base_acc, col sep=comma]{data/ECE_CIFAR100.csv};
\addplot[blue] table [x=cut_confi, y=cut_acc, col sep=comma]{data/ECE_CIFAR100.csv};

\nextgroupplot[
            width=3.6cm,
            height=3.6cm,
            no marks,
            every axis plot/.append style={thick},
            grid=major,
            scaled ticks = false,
            xmajorticks=false,
            ymajorticks=false,
            xlabel near ticks,
            ylabel near ticks,
            tick pos=left,
            tick label style={font=\scriptsize},
            xtick={0, 0.2, 0.4, 0.6, 0.8, 1.0},
            ytick={0, 0.2, 0.4, 0.6, 0.8, 1.0},
            xlabel shift=0.0cm,         
            ylabel shift=0.0cm,
            label style={font=\scriptsize},
            xlabel style={align=center},
            xmin=0,
            xmax=1,
            ymin=0.,
            ymax=1.,
            title style={at={(0.5, 1)},anchor=south, yshift=-0.2cm},
            title = Puzzle,
            title style = {font=\small},
            clip=false ,
            ]
            
\node at (rel axis cs: 0., -0.06) {\scriptsize 0};
\node at (rel axis cs: 1., -0.06) {\scriptsize 1};
\node at (rel axis cs: -0.06, 1.) {\scriptsize 1};

\addplot[red, dotted] table [x=base_confi, y=base_acc, col sep=comma]{data/ECE_CIFAR100.csv};
\addplot[blue] table [x=puzzle_confi, y=puzzle_acc, col sep=comma]{data/ECE_CIFAR100.csv};

\nextgroupplot[
            width=3.6cm,
            height=3.6cm,
            no marks,
            every axis plot/.append style={thick},
            grid=major,
            scaled ticks = false,
            xmajorticks=false,
            ymajorticks=false,
            xlabel near ticks,
            ylabel near ticks,
            tick pos=left,
            tick label style={font=\scriptsize},
            xtick={0, 0.2, 0.4, 0.6, 0.8, 1.0},
            ytick={0, 0.2, 0.4, 0.6, 0.8, 1.0},
            xlabel shift=0.0cm,         
            ylabel shift=0.0cm,
            label style={font=\scriptsize},
            xlabel style={align=center},
            xmin=0,
            xmax=1,
            ymin=0.,
            ymax=1.,
            title style={at={(0.5, 1)},anchor=south, yshift=-0.2cm},
            title = Co-Mixup,
            title style = {font=\small},
            clip=false ,
            ]
            
\node at (rel axis cs: 0., -0.06) {\scriptsize 0};
\node at (rel axis cs: 1., -0.06) {\scriptsize 1};
\node at (rel axis cs: -0.06, 1.) {\scriptsize 1};

\addplot[red, dotted] table [x=base_confi, y=base_acc, col sep=comma]{data/ECE_CIFAR100.csv};
\addplot[blue] table [x=comix_confi, y=comix_acc, col sep=comma]{data/ECE_CIFAR100.csv};

\end{groupplot}
\end{tikzpicture}
\vspace{-0.5cm}
\caption{Confidence-Accuracy plots for classifiers on CIFAR-100. From the figure, the Vanilla network shows over-confident predictions, whereas other mixup baselines tend to have under-confident predictions. We can find that Co-Mixup has best-calibrated predictions.}
\label{fig:calibration_cifar_main}
\vspace{-0.3cm}
\end{figure}

\subsection{Robustness}\label{exp:robust}
In response to the recent problem raised by \citet{random_net} that deep neural network agents often fail to generalize to unseen environments, we consider the situation where the statistics of the foreground object, such as color or shape, is unchanged, but with the corrupted (or replaced) background. In detail, we consider the following operations: 1) replacement with another image and 2) adding Gaussian noise. We use ground-truth bounding boxes to separate the foreground from the background, and then apply the previous operations independently to obtain test datasets. We provide a detailed description of datasets in \Cref{appen:corruption}.

With the test datasets described above, we evaluate the robustness of the pre-trained classifiers. As shown in \Cref{tab:robust-imagenet}, Co-Mixup shows significant performance gains at various background corruption tests compared to the other mixup baselines. For each corruption case, the classifier trained with Co-Mixup outperforms the others in Top-1 error rate with the performance margins of 2.86\% and 3.33\% over the Vanilla model.

\begin{table}[ht] 
    \centering
    \small{
    \begin{tabular}{lcccccc}
    \toprule[1pt]
    Corruption type & Vanilla & Input & Manifold & CutMix & Puzzle Mix & Co-Mixup  \\
    \midrule
    Random replacement & 41.63 & 39.41 & 39.72 & 46.20 & 39.23 & \textbf{38.77} \\
    & \small{($+$17.62)} & \small{($+$16.47)} & \small{($+$16.47)} & \small{($+$23.16)} & \small{($+$16.69)} & \small{($+$16.38)} \vspace{3pt} \\
    Gaussian noise & 29.22 & 26.29 & 26.79 & 27.13 & 26.11 & \textbf{25.89} \\
    & \small{($+$5.21)} & \small{($+$3.35)} & \small{($+$3.54)} & \small{($+$4.09)} & \small{($+$3.57)} & \small{($+$3.49)} \\
    \bottomrule[1pt]
    \end{tabular}
    }
    \caption{Top-1 error rates of various mixup methods for background corrupted ImageNet validation set. The values in the parentheses indicate the error rate increment by corrupted inputs compared to clean inputs.}
    \label{tab:robust-imagenet}
\end{table}

\subsection{Baselines with multiple inputs}
To further investigate the effect of the number of inputs for the mixup in isolation, we conduct an ablation study on baselines using multiple mixing inputs. For fair comparison, we use $\mathrm{Dirichlet}(\alpha, \ldots, \alpha)$ prior for the mixing ratio distribution and select the best performing  $\alpha$ in $\{0.2, 1.0, 2.0\}$. Note that we overlay multiple boxes in the case of CutMix. \Cref{tab:multiple-input} reports the classification test errors on CIFAR-100 with PreActResNet18. From the table, we find that mixing multiple inputs decreases the performance gains of each mixup baseline. These results demonstrate that mixing multiple inputs could lead to possible degradation of the performance and support the necessity of considering saliency information and diversity as in Co-Mixup.

\begin{table}[ht] 
    \centering
    \begin{tabular}{lccc|c}
    \toprule[1pt]
    \# inputs for mixup & Input & Manifold & CutMix & Co-Mixup   \\
    \midrule
    \# inputs $= 2$ & 22.43 & 21.64 & 21.29 &   \\
    \# inputs $= 3$ & 23.03 & 22.13 & 22.01 & 19.87  \\
    \# inputs $= 4$ & 23.12 & 22.07 & 22.20 &   \\
    \bottomrule[1pt]
    \end{tabular}
    \caption{Top-1 error rates of mixup baselines with multiple mixing  inputs on CIFAR-100 and PreActResNet18. We report the mean values of three different random seeds. Note that Co-Mixup optimally determines the number of inputs for each output by solving the optimization problem.}
    \label{tab:multiple-input}
\end{table}

\section{Conclusion}
We presented Co-Mixup for optimal construction of a batch of mixup examples by finding the best combination of salient regions among a collection of input data while encouraging diversity among the generated mixup examples. This leads to a discrete optimization problem minimizing a novel submodular-supermodular objective. In this respect, we present a practical modular approximation and iterative submodular optimization algorithm suitable for minibatch based neural network training. 
Our experiments on generalization, weakly supervised object localization, and robustness against background corruption show Co-Mixup achieves the state of the art performance compared to other mixup baseline methods. 
The proposed generalized mixup framework tackles the important question of `what to mix?' while the existing methods only consider `how to mix?'. We believe this work can be applied to new applications where the existing mixup methods have not been applied, such as multi-label classification, multi-object detection, or source separation.

\section*{Acknowledgements}
This research was supported in part by Samsung Advanced Institute of Technology, Samsung Electronics Co., Ltd, Institute of Information \& Communications Technology Planning \& Evaluation (IITP) grant funded by the Korea government (MSIT) (No. 2020-0-00882, (SW STAR LAB) Development of deployable learning intelligence via self-sustainable and trustworthy machine learning), and Research Resettlement Fund for the new faculty of Seoul National University. Hyun Oh Song is the corresponding author.

\bibliography{main}
\bibliographystyle{abbrvnat}

\appendix
\section{Supplementary notes for objective}
\subsection{Notations}
In \Cref{tab:notations}, we provide a summary of notations in the main text. 

\begin{table}[h!]
    \centering
    \small
    \begin{tabular}{ll}
    \toprule[1pt]
        Notation & Meaning\\
        \midrule
        $m$, $m'$, $n$  &
         \# inputs, \# outputs, dimension of data \\
        $c_k \in \mathbb{R}^m$  ($1\le k\le n$)    &  
        labeling cost for $m$ input sources at the $k^{th}$ location\\ 
        $z_{j,k} \in \mathcal{L}^m$  ($1\le j\le m'$, $1\le k\le n$)      &
         input source ratio at the $k^{th}$ location of the $j^{th}$ output \\
        $z_j \in \mathcal{L}^{m\times n}$    & 
        labeling of the $j^{th}$ output \\
        $o_j\in \mathbb{R}^m$ & aggregation of the labeling of the $j^{th}$ output\\
        $A\in \mathbb{R}^{m\times m}$   & compatibility between inputs \\
        \bottomrule
    \end{tabular}
    \vspace{0.25cm}
    \caption{A summary of notations.}
    \vspace{-0.2cm}    
    \label{tab:notations}
\end{table}

\subsection{Interpretation of compatibility}
In our main objective Equation (1), we introduce a compatibility matrix $A = (1-\omega)I + \omega A_c$ between inputs. By minimizing $\langle o_j, o_{j'} \rangle_A$ for $j\ne j'$, we encourage each individual mixup examples to have high compatibility within. \Cref{fig:compat} explains how the compatibility term works by comparing simple cases. Note that our framework can reflect any compatibility measures for the optimal mixup.

\begin{figure}[h!]
    \centering
    \includegraphics[width=0.88\textwidth]{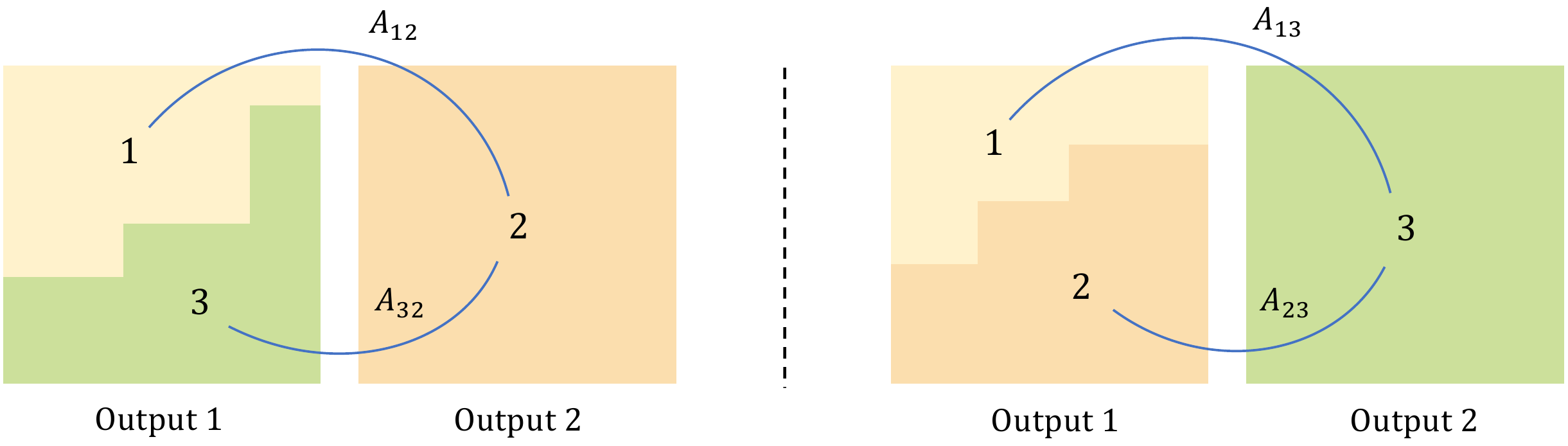}
    \caption{Let us consider Co-Mixup with three inputs and two outputs. The figure represents two Co-Mixup results. Each input is denoted as a number and color-coded. Let us assume that input 1 and input 2 are more compatible, $i.e.$, $A_{12}\gg A_{23}$ and $A_{12}\gg A_{13}$. Then, the left Co-Mixup result has a larger inner-product value $\langle o_1,  o_2 \rangle_A$ than the right. Thus the mixup result on the right has higher compatibility than the result on the left within each output example.}
    \label{fig:compat}
\end{figure}

\section{Proofs}

\subsection{Proof of proposition 1}\label{subsec:prop1}
\begin{lemma}
For a positive semi-definite matrix $A\in \mathbb{R}_{+}^{m\times m}$ and $x,x' \in \mathbb{R}^m$, $s(x,x') = x^\intercal A x'$ is pairwise supermodular. 
\label{lem:1}
\end{lemma}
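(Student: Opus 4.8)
The plan is to verify the pairwise supermodularity inequality directly from the definition. By definition, $s(x,x') = x^\intercal A x'$ is pairwise supermodular exactly when $-s$ is pairwise submodular, i.e. when for all $x,x' \in \mathbb{R}^m$,
\[
  s(x,x) + s(x',x') \ \ge\ s(x,x') + s(x',x).
\]
So the whole task reduces to expanding both sides and recognizing the difference as a quadratic form in $x - x'$ evaluated against $A$.

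First I would write out the left-hand side minus the right-hand side:
\[
  s(x,x) + s(x',x') - s(x,x') - s(x',x)
  = x^\intercal A x + x'^\intercal A x' - x^\intercal A x' - x'^\intercal A x.
\]
Since $A$ is symmetric (it is positive semi-definite), $x^\intercal A x' = x'^\intercal A x$, so the right-hand terms combine, and the expression equals
\[
  x^\intercal A x - 2\, x^\intercal A x' + x'^\intercal A x'
  = (x - x')^\intercal A (x - x').
\]
Because $A$ is positive semi-definite, $(x-x')^\intercal A (x-x') \ge 0$, which is precisely the claimed inequality. Hence $s$ is pairwise supermodular.

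There is essentially no hard step here; the only thing to be careful about is the use of symmetry of $A$ to merge the two cross terms into $-2x^\intercal A x'$ — without symmetry the difference would be $(x-x')^\intercal \tfrac{A+A^\intercal}{2} (x-x')$, which is still fine since positive semi-definiteness is a statement about the symmetric part, but the paper already takes $A$ symmetric by construction, so I would simply invoke that. This lemma then feeds directly into Proposition 1: applying it with $x = z_{j_1,k}$ and $x' = z_{j_2,k}$ shows each bilinear summand $o_{j}^\intercal A o_{j'}$ contributing the pair $(z_{j_1,k}, z_{j_2,k})$ is pairwise supermodular, and since the clipping $\max\{\tau,\cdot\}$ of a supermodular function (composed with the increasing map and a modular remainder) preserves supermodularity, $f_c$ is pairwise supermodular in that pair.
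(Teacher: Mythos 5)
Your proof is correct and is essentially identical to the paper's: both expand $s(x,x)+s(x',x')-s(x,x')-s(x',x)$, use symmetry of $A$ to combine the cross terms, and recognize the result as $(x-x')^\intercal A(x-x')\ge 0$ by positive semi-definiteness. Your side remark about the symmetric part of $A$ is a nice extra but not needed, since the paper takes $A$ symmetric by construction.
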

\begin{proof}
    $s(x,x) + s(x',x') - s(x,x') - s(x',x) = x^\intercal A x + x^\intercal A x - 2x^\intercal A x' = (x-x')^\intercal A (x-x')$, and because $A$ is positive semi-definite, $(x-x')^\intercal A (x-x') \ge 0$.
\end{proof}

\setcounter{proposition}{0}
\begin{proposition}
    The compatibility term $f_c$ in Equation (1) is pairwise supermodular for every pair of $(z_{j_1,k}, z_{j_2,k})$ if A is positive semi-definite. 
\end{proposition}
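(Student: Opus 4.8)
The plan is to reduce Proposition 1 to Lemma~\ref{lem:1} by isolating, for a fixed location $k$ and two fixed output indices $j_1 \ne j_2$, exactly how the compatibility term $f_c$ depends on the pair $(z_{j_1,k}, z_{j_2,k})$, treating all other variables as constants. First I would recall that, before the clipping, $f_c(z)$ is $\gamma$ times the double sum $\sum_{j}\sum_{j'\ne j} o_j^\intercal A o_{j'}$ where $o_j = \sum_{k=1}^n z_{j,k}$. Since $A$ is symmetric, this equals $2\gamma \sum_{j < j'} o_j^\intercal A o_{j'}$, so the only term in this sum that involves \emph{both} $z_{j_1,k}$ and $z_{j_2,k}$ is the single cross term $2\gamma\, o_{j_1}^\intercal A o_{j_2}$ (every other $o_j^\intercal A o_{j'}$ involves at most one of the two variables, hence is modular in the pair and does not affect the pairwise supermodularity inequality).

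Next I would expand $o_{j_1}^\intercal A o_{j_2} = \big(z_{j_1,k} + r_1\big)^\intercal A \big(z_{j_2,k} + r_2\big)$ where $r_1 = \sum_{k'\ne k} z_{j_1,k'}$ and $r_2 = \sum_{k'\ne k} z_{j_2,k'}$ are constants with respect to the pair. This expands to $z_{j_1,k}^\intercal A z_{j_2,k} + z_{j_1,k}^\intercal A r_2 + r_1^\intercal A z_{j_2,k} + r_1^\intercal A r_2$. The last three summands are each modular in $(z_{j_1,k}, z_{j_2,k})$ — the middle two depend on only one of the two arguments and the last on neither — so they contribute zero to the pairwise difference $s(x,x) + s(x',x') - s(x,x') - s(x',x)$. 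The surviving contribution is precisely $2\gamma\, z_{j_1,k}^\intercal A z_{j_2,k}$, which is pairwise supermodular by Lemma~\ref{lem:1} (applied with the argument $x = z_{j_1,k}$, $x' = z_{j_2,k}$) since $\gamma \ge 0$ and $A$ is positive semi-definite. A sum of a pairwise supermodular function and a modular function is pairwise supermodular, so $f_c$ without clipping is pairwise supermodular in $(z_{j_1,k}, z_{j_2,k})$.

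To finish I would handle the clipping, i.e. the $\max\{\tau, \cdot\}$. Here I would invoke that the pointwise maximum of a supermodular function and a constant is still supermodular: writing the inner double sum as $\phi(z_{j_1,k}, z_{j_2,k})$ plus a modular remainder $\psi$, we have $f_c = \gamma \max\{\tau, \phi + \psi\}$, and since $\phi$ (up to the modular $\psi$ piece that can be absorbed) is supermodular, the composition of the nondecreasing convex scalar function $t \mapsto \max\{\tau, t\}$ with it preserves pairwise supermodularity — concretely one checks the two-point inequality directly using that $\max$ is nondecreasing in each argument and the supermodular inequality for $\phi+\psi$. The main obstacle I anticipate is being careful that the modular remainder $\psi$ does not spoil this last step: $\max\{\tau, \text{supermodular} + \text{modular}\}$ need not be supermodular in general unless the modular part is handled correctly, so I would either argue that the clipping in the algorithm is applied coordinatewise in a way consistent with the claim, or restrict the statement (as the proposition does) to the pair $(z_{j_1,k}, z_{j_2,k})$ where the dependence is clean, and verify the two-point inequality for $\max\{\tau, a x^\intercal A x' + (\text{linear in } x, x')\}$ by direct case analysis on whether each of the four evaluations hits the clipping floor $\tau$ or not.
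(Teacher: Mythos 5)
Your treatment of the unclipped compatibility term is correct and is essentially the paper's own argument: isolate the single bilinear cross term $2z_{j_1,k}^\intercal A z_{j_2,k}$, observe that every other contribution is modular in the pair and hence cancels in the four-point difference $s(x,x)+s(x',x')-s(x,x')-s(x',x)$, and invoke \Cref{lem:1}. Where you and the paper diverge is the clipping, and you have put your finger on exactly the step the paper glosses over. The paper writes the inner double sum as $c+2z_{j_1,k}^\intercal A z_{j_2,k}$ with $c$ treated as a constant; under that idealization the two off-diagonal evaluations coincide by symmetry of $A$, and the four-point inequality for $\max\{\tau,\cdot\}$ closes in two lines (if $c+2x^\intercal Ax'\le\tau$ the right-hand side is $2\tau$ and the left-hand side is at least $2\tau$; otherwise use $x^\intercal Ax+x'^\intercal Ax'\ge 2x^\intercal Ax'$). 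That is what the ``budget additive'' citation is standing in for.

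However, as you correctly suspect, $c$ is not constant: it carries the modular pieces $2z_{j_1,k}^\intercal A r_2+2r_1^\intercal A z_{j_2,k}$ (plus couplings to the other outputs), which break the symmetry $s(x,x')=s(x',x)$, and then $\max\{\tau,\cdot\}$ applied to a supermodular-plus-modular function can genuinely violate the four-point inequality. Concretely, take $m=m'=2$, $A=I$, $x=e_1$, $x'=e_2$, $r_1=5e_1$, $r_2=5e_2$: the inner sum $2(u+r_1)^\intercal A(v+r_2)$ evaluates to $12,12,0,20$ at $(x,x),(x',x'),(x,x'),(x',x)$, which satisfies the unclipped inequality ($24\ge 20$) but fails after clipping at $\tau=15$ ($30\not\ge 35$). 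So the ``direct case analysis'' you defer to in your last step would not go through as stated; to finish, you would have to adopt the paper's constant-$c$ reduction (equivalently, prove supermodularity only for the symmetrized surrogate $\max\{\tau,c+2z_{j_1,k}^\intercal Az_{j_2,k}\}$), which is precisely the simplification you were rightly uneasy about. In short: your unclipped argument is complete and matches the paper; your clipped argument is an honest sketch of a step that, without the paper's idealization, cannot be completed in the generality you describe.
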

\begin{proof}
    For $j_1$ and $j_2$, \textit{s.t.}, $j_1\neq j_2$,  $\max \Big\{\tau, \sum_{j=1}^{m'} \sum_{j'=1, j'\neq j}^{m'} 
     (\sum_{k=1}^{n} z_{j,k})^\intercal A (\sum_{k=1}^{n} z_{j',k})\Big\} = \max\{\tau, c + 2z_{j_1,k}^\intercal A z_{j_2,k}\} = -\min\{-\tau, -c - 2z_{j_1,k}^\intercal A z_{j_2,k}\}$, for $\exists c\in \mathbb{R}$. By \Cref{lem:1}, $-z_{j_1,k}^\intercal A z_{j_2,k}$ is pairwise submodular, and because a budget additive function preserves submodularity \citep{budget_additive}, $\min\{-\tau, -c - 2z_{j_1,k}^\intercal A z_{j_2,k}\}$ is pairwise submodular with respect to $(z_{j_1,k}, z_{j_2,k})$.
\end{proof}

\subsection{Proof of proposition 3}\label{subsec:prop3}
\setcounter{proposition}{2}
\begin{proposition}
    The modularization given by Equation (2) satisfies the criteria.
\end{proposition}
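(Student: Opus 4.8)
The plan is to reduce each of the two (informally stated) criteria to an elementary property of the componentwise maximum $\max\{\tau',v_{\text{-}j}\}$ appearing in \Cref{eq:mod}, and then verify each property by inspection. Fix an output index $j$ and borrow the notation from the proof of \Cref{prop:compat}: with $o_j=\sum_{k=1}^{n}z_{j,k}$ and $v_{\text{-}j}=2\sum_{j'\neq j}A o_{j'}$, the clipped compatibility term, viewed as a function of $z_j$ only, is $f_c(z)=\max\{\tau-c,\,v_{\text{-}j}^\intercal o_j\}+c$ with $c$ independent of $o_j$, and its modular surrogate is
\begin{equation*}
\tilde{f}_c(z)=\max\{\tau',v_{\text{-}j}\}^\intercal o_j=\sum_{i=1}^{m} w_i\,(o_j)_i,\qquad w_i\coloneqq\max\{\tau',(v_{\text{-}j})_i\}.
\end{equation*}
Two bookkeeping facts I would record first: the constraints $z_{j,k}\in\mathcal{L}^m$, $\|z_{j,k}\|_1=1$ force $o_j\ge 0$ componentwise and $\|o_j\|_1=\sum_{k}\|z_{j,k}\|_1=n$; and, inside the coordinate step for $z_j$, the only part of the compatibility that varies is $v_{\text{-}j}^\intercal o_j$.

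For criterion 1 I would observe that the per-label weight $w_i=\max\{\tau',(v_{\text{-}j})_i\}$ is a nondecreasing function of $(v_{\text{-}j})_i$. Since the $i$-th entry of $v_{\text{-}j}$ quantifies how heavily source $i$ is already consumed by the other outputs (weighted by $A$), raising any $(v_{\text{-}j})_i$ raises both the coupled compatibility $v_{\text{-}j}^\intercal o_j$ (using $(o_j)_i\ge 0$) and the surrogate $\tilde{f}_c=\sum_i w_i(o_j)_i$; hence the modularized cost charges each label of $o_j$ in monotone proportion to its $v_{\text{-}j}$-entry, which is exactly the statement of criterion 1. For completeness I would also note $\tilde{f}_c=w^\intercal o_j\ge\max\{\tau' n,\,v_{\text{-}j}^\intercal o_j\}$, since $w\ge\tau'\mathbf{1}$ and $w\ge v_{\text{-}j}$ componentwise and $o_j\ge 0$, so the surrogate dominates the clipped term up to the additive constant $c$.

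For criterion 2 I would choose the free parameter as $\tau'\coloneqq(\tau-c)/n$, the value whose existence is asserted in \Cref{eq:mod}. Whenever the other outputs are such that $(v_{\text{-}j})_i\le\tau'$ for every $i$ --- equivalently, the regime in which every feasible $o_j$ has $v_{\text{-}j}^\intercal o_j\le\tau'\|o_j\|_1=\tau' n=\tau-c$, i.e. below the clipping level --- one has $w=\tau'\mathbf{1}$ and therefore $\tilde{f}_c(z)=\tau'\|o_j\|_1=\tau' n$, a constant independent of $z_j$. More generally, reallocating the mass of $o_j$ among any coordinates $i$ with $(v_{\text{-}j})_i\le\tau'$ leaves $\tilde{f}_c$ unchanged. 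Thus the surrogate applies no differential penalty while the compatibility stays below $\tau-c$, mirroring the flat region of $f_c=\max\{\tau-c,v_{\text{-}j}^\intercal o_j\}+c$, which equals the constant $\tau$ there.

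The only genuine delicacy --- the hard part --- is not computational but a matter of fixing the right formalization of the two criteria. Criterion 1 should be read as monotonicity of the per-label weight $\max\{\tau',(v_{\text{-}j})_i\}$ in $(v_{\text{-}j})_i$, and not as global monotonicity of $\tilde{f}_c$ under arbitrary feasibility-preserving rearrangements of $o_j$; the latter can fail, because the clipping cushion $\max\{\tau',(v_{\text{-}j})_i\}-(v_{\text{-}j})_i$ differs across coordinates, so a rearrangement can raise $v_{\text{-}j}^\intercal o_j$ while lowering $\tilde{f}_c$. Criterion 2 should be read as constancy of $\tilde{f}_c$ on the sub-threshold regime, pinned to the original clip by the choice $\tau'=(\tau-c)/n$. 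With these readings in place, both verifications follow immediately from the definition of the componentwise maximum together with the normalization $o_j\ge 0$, $\|o_j\|_1=n$.
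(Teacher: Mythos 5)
Your argument is correct and rests on exactly the two elementary facts the paper's proof uses: the monotonicity of $t\mapsto\max\{\tau',t\}$ and the normalization $\|o_j\|_1=\sum_k\|z_{j,k}\|_1=n$; your treatment of criterion 2 (the surrogate is the constant $\tau'n$ whenever $o_j$ is supported on coordinates with $(v_{\text{-}j})_i\le\tau'$) is the same as the paper's, with the useful extra step of exhibiting $\tau'=(\tau-c)/n$ where the paper only asserts existence. The one substantive divergence is the formalization of criterion 1. The paper fixes $v_{\text{-}j}$ and perturbs $o_j$ by a single pairwise mass transfer $o'_j[i_1]=o_j[i_1]+\alpha$, $o'_j[i_2]=o_j[i_2]-\alpha$: if such a transfer increases the true compatibility $v_{\text{-}j}^\intercal o_j$, then (since $\max\{\tau',\cdot\}$ preserves the ordering of $(v_{\text{-}j})_{i_1}$ and $(v_{\text{-}j})_{i_2}$) it also weakly increases $\max\{\tau',v_{\text{-}j}\}^\intercal o_j$; the paper explicitly calls this ``locally satisfied.'' You instead read criterion 1 as monotonicity of the per-label weight $\max\{\tau',(v_{\text{-}j})_i\}$ in $(v_{\text{-}j})_i$, i.e.\ you vary the state of the other outputs rather than the optimization variable $o_j$. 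Both readings are defensible given how informally the criteria are stated, but since $o_j$ is what the coordinate step optimizes, the paper's reading is arguably closer to the intent, and it is worth noting that your (correct) observation that a rearrangement of $o_j$ across three or more coordinates can raise $v_{\text{-}j}^\intercal o_j$ while lowering the surrogate does not contradict the paper, whose claim is confined to single pairwise transfers and does hold there. Your domination inequality $\max\{\tau',v_{\text{-}j}\}^\intercal o_j\ge\max\{\tau'n,\,v_{\text{-}j}^\intercal o_j\}$ is a correct additional remark that the paper does not state.
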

\begin{proof}
    Note, by the definition in Equation (1), the compatibility between the $j^{th}$ and $j'^{th}$ outputs is $o_{j'}^\intercal A o_{j}$, and thus, $v_{\text{-}j}^\intercal o_j$ represents the compatibility between the $j^{th}$ output and the others. In addition, $\|o_j\|_1 = \|\sum_{k=1}^{n} z_{j,k}\|_1 = \sum_{k=1}^{n} \|z_{j,k}\|_1 = n$. In a local view, for the given $o_j$, let us define a vector $o'_j$ as $o'_j[i_1] = o_j[i_1] + \alpha$ and $o'_j[i_2] = o_j[i_2] - \alpha$ for $\alpha>0$. Without loss of generality, let us assume $v_{\text{-}j}$ is sorted in ascending order. Then, $v_{\text{-}j}^\intercal o_j \le v_{\text{-}j}^\intercal o'_j$ implies $i_1>i_2$, and because the max function preserves the ordering, $\max\{\tau', v_{\text{-}j}\}^\intercal o_j \le \max\{\tau', v_{\text{-}j}\}^\intercal o'_j$. Thus, the criterion 1 is locally satisfied. Next, for $\tau'>0$, $\|\max\{\tau', v_{\text{-}j}\}^\intercal o_j\|_1 \ge \tau'\|o_j\|_1 =\tau' n$. Let $\exists i_0$ s.t. for $i<i_0, v_{\text{-}j}[i]<\tau'$, and for $i\ge i_0, v_{\text{-}j}[i]\ge\tau'$. Then, for $o_j$ containing positive elements only in indices smaller than $i_0$, $\max\{\tau', v_{\text{-}j}\}^\intercal o_j = \tau'n$ which means there is no extra penalty from the compatibility. In this respect, the proposed modularization satisfies the criterion 2 as well.
\end{proof}

\section{Implementation details}\label{appen:implementation}
We perform the optimization after down-sampling the given inputs and saliency maps to the specified size ($4\times4$). After the optimization, we up-sample the optimal labeling to match the size of the inputs and then mix inputs according to the up-sampled labeling. For the saliency measure, we calculate the gradient values of training loss with respect to the input data and measure $\ell_2$ norm of the gradient values across input channels \citep{saliency}. In classification experiments, we retain the gradient information of network weights obtained from the saliency calculation for regularization. For the distance in the compatibility term, we measure $\ell_1$-distance between the most salient regions.

For the initialization in Algorithm 1, we use \textit{i.i.d.} samples from a categorical distribution with equal probabilities. We use \textit{alpha-beta} swap algorithm from pyGCO\footnote{https://github.com/Borda/pyGCO} to solve the minimization step in Algorithm 1, which can find local-minima of a multi-label submodular function. However, the worst-case complexity of \textit{alpha-beta} swap algorithm with $|\mathcal{L}|=2$ is $O(m^2n)$, and in the case of mini-batch with 100 examples, iteratively applying the algorithm can become a bottleneck during the network training. To mitigate the computational overhead, we partition the mini-batch (each of size 20) and then apply Algorithm 1 independently per each partition. 

The worst-case complexity theoretic of the naive implementation of Algorithm 1 increases exponentially as $|\mathcal{L}|$ increases. Specifically, the worst-case theoretic complexity of the \textit{alpha-beta} swap algorithm is proportional to the square of the number of possible states of $z_{j, k}$, which is proportional to $m^{|\mathcal{L}|-1}$. To reduce the number of possible states in a multi-label case, we solve the problem for binary labels ($|\mathcal{L}|=2$) at the first inner-cycle and then extend to multi labels ($|\mathcal{L}|=3$) only for the currently assigned indices of each output in the subsequent cycles. This reduces the number of possible states to $O(m+\bar{m}^{|\mathcal{L}|-1})$ where $\bar{m} \ll m$. Here, $\bar{m}$ means the number of currently assigned indices for each output.

Based on the above implementation, we train models with Co-Mixup in a feasible time. For example, in the case of ImageNet training with 16 Intel I9-9980XE CPU cores and 4 NVIDIA RTX 2080Ti GPUs, Co-Mixup training requires 0.964s per batch, whereas the vanilla training without mixup requires 0.374s per batch. Note that Co-Mixup requires saliency computation, and when we compare the algorithm with Puzzle Mix, which performs the same saliency computation, Co-Mixup is only slower about 1.04 times. Besides, as we down-sample the data to the fixed size regardless of the data dimension, the additional computation cost of Co-Mixup relatively decreases as the data dimension increases. Finally, we present the empirical time complexity of Algorithm 1 in \Cref{fig:time}. As shown in the figure, Algorithm 1 has linear time complexity over $|\mathcal{L}|$ empirically. Note that we use $|\mathcal{L}|=3$ in all of our main experiments, including a classification task. 

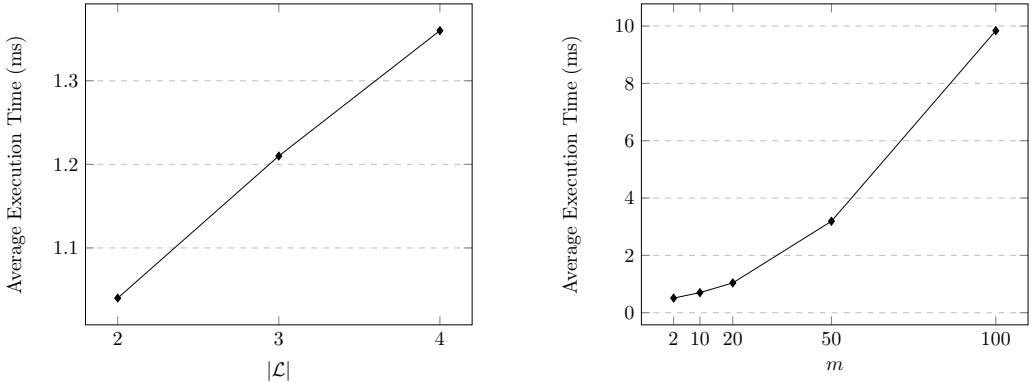
\begin{figure}[h] 
  \begin{tikzpicture}[scale=0.75]
\begin{groupplot}[
        group style={columns=2, horizontal sep=3cm, 
        vertical sep=0.0cm},
        ]

    \nextgroupplot[
    xlabel={|$\mathcal{L}$|},
    ylabel={Average Execution Time (ms)},
    ymajorgrids=true,
    grid style=dashed,
    xtick={1, 2, 3, 4, 5},
    xticklabels={1, 2, 3, 4, 5},
    ]
    
    \addplot[
    color=black,
    mark=diamond*,
    ]
    coordinates {
    (2, 1.04)(3, 1.21)(4, 1.36)
    };

    \nextgroupplot[
    xlabel={$m$},
    ylabel={Average Execution Time (ms)},
    ymajorgrids=true,
    grid style=dashed,
    xtick={2, 10, 20, 50, 100},
    xticklabels={2, 10, 20, 50, 100},
    ]
    
    \addplot[
    color=black,
    mark=diamond*,
    ]
    coordinates {
    (2, 0.51)(10, 0.7)(20, 1.04)(50, 3.19)(100, 9.84)
    };

\end{groupplot}
\end{tikzpicture}
  \caption{Mean execution time (ms) of Algorithm 1 per each batch of data over 100 trials. The left figure shows the time complexity of the algorithm over $|\mathcal{L}|$ and the right figure shows the time complexity over the number of inputs $m$. Note that the other parameters are fixed equal to the classification experiments setting, $m=m'=20$, $n=16$, and $|\mathcal{L}|=3$.}
  \label{fig:time}
\end{figure}

\section{Algorithm Analysis}\label{appen:analysis}
In this section, we perform comparison experiments to analyze the proposed Algorithm 1. First, we compare our algorithm with the exact brute force search algorithm to inspect the optimality of the algorithm. Next, we compare our algorithm with the BP algorithm proposed by \citet{bp}.

\subsection{Comparison with Brute Force}
To inspect the optimality of the proposed algorithm, we compare the function values of the solutions of Algorithm 1, brute force search algorithm, and random guess. Due to the exponential time complexity of the brute force search, we compare the algorithms on small scale experiment settings. Specifically, we test algorithms on settings of $(m=m'=2,~n=4)$, $(m=m'=2,~n=9)$, and $(m=m'=3,~n=4)$ varying the number of inputs and outputs $(m,~ m')$ and the dimension of data $n$. We generate unary cost matrix in the objective $f$ by sampling data from uniform distribution.

We perform experiments with 100 different random seeds and summarize the results on \Cref{tab:sim1}. From the table, we find that the proposed algorithm achieves near optimal solutions over various settings. We also measure relative errors between ours and random guess, $(f(z_\text{ours})-f(z_\text{brute}))/ (f(z_\text{random})-f(z_\text{brute}))$. As a result, our algorithm achieves relative error less than $0.01$.

\begin{table}[ht]
    \centering
    \begin{tabular}{lccc|cc}
    \toprule[1pt]
    Configuration & Ours & Brute force (optimal) & Random guess & Rel. error \\
    \midrule
    $(m=m'=2,~n=4)$ & 1.91 & 1.90 & 3.54 & 0.004 \\
    $(m=m'=2,~n=9)$ & 1.93 & 1.91 & 3.66 & 0.01 \\
    $(m=m'=3,~n=4)$ & 2.89 & 2.85 & 22.02 & 0.002 \\
    \bottomrule[1pt]
    \end{tabular}
    \caption{Mean function values of the solutions over 100 different random seeds. Rel. error means the relative error between ours and random guess.}
    \label{tab:sim1}
\end{table}

\subsection{Comparison with another BP algorithm}
We compare the proposed algorithm and the BP algorithm proposed by \citet{bp}. We evaluate function values of solutions by each method using a random unary cost matrix from a uniform distribution. We compare methods over various scales by controlling the number of mixing inputs $m$.

\Cref{tab:sim2} shows the averaged function values with standard deviations in the parenthesis. As we can see from the table, the proposed algorithm achieves much lower function values and deviations than the method by \citet{bp} over various settings. Note that the method by \citet{bp} has high variance due to randomization in the algorithm. We further compare the algorithm convergence time in \Cref{tab:sim3}. The experiments verify that the proposed algorithm is much faster and effective than the method by \citet{bp}.

\begin{table}[ht]
    \centering
    \begin{tabular}{lccccc}
    \toprule[1pt]
    Algorithm & $m=5$ & $m=10$ & $m=20$ & $m=50$ & $m=100$ \\
    \midrule
    Ours & 3.1 (1.7) & 15 (6.6) & 54 (15) & 205 (26) & 469 (31) \\
    Narasimhan & 269 (58) & 1071 (174) & 4344 (701) & 24955 (4439) & 85782 (14337) \\
    Random & 809 (22) & 7269 (92) & 60964 (413) & 980973 (2462) & 7925650 (10381) \\
    \bottomrule[1pt]
    \end{tabular}
    \caption{Mean function values of the solutions over 100 different random seeds. We report the standard deviations in the parenthesis. Random represents the random guess algorithm.}
    \label{tab:sim2}
\end{table}

\begin{table}[ht]
    \centering
    \begin{tabular}{lccccc}
    \toprule[1pt]
    Algorithm & $m=5$ & $m=10$ & $m=20$ & $m=50$ & $m=100$ \\
    \midrule
    Ours & 0.02 & 0.04 & 0.11 & 0.54 & 2.71 \\ 
    Narasimhan & 0.06 & 0.09 & 0.27 & 1.27 & 7.08\\ 
    \bottomrule[1pt]
    \end{tabular}
    \caption{Convergence time (s) of the algorithms.}
    \label{tab:sim3}
\end{table}

\section{Hyperparameter settings}\label{appen:hyper}
We perform Co-Mixup after down-sampling the given inputs and saliency maps to the pre-defined resolutions regardless of the size of the input data. In addition, we normalize the saliency of each input to sum up to 1 and define unary cost using the normalized saliency. As a result, we use an identical hyperparameter setting for various datasets; CIFAR-100, Tiny-ImageNet, and ImageNet. In details, we use $(\beta, \gamma, \eta, \tau) = (0.32, 1.0, 0.05, 0.83)$ for all of experiments. Note that $\tau$ is normalized according to the size of inputs ($n$) and the ratio of the number of inputs and outputs ($m/m'$), and we use an isotropic Dirichlet distribution with $\alpha = 2$ for prior $p$. For a compatibility matrix, we use $\omega=0.001$. 

For baselines, we tune the mixing ratio hyperparameter, \textit{i.e.,} the beta distribution parameter \citep{mixup}, among $\{0.2, 1.0, 2.0\}$ for all of the experiments if the specific setting is not provided in the original papers.

\section{Additional Experimental Results}
\subsection{Another Domain: Speech}\label{appen:speech}
In addition to the image domain, we conduct experiments on the speech domain, verifying Co-Mixup works on various domains. Following \citep{mixup}, we train LeNet \citep{lenet} and VGG-11 \citep{vgg} on the Google commands dataset \citep{google_command}. The dataset consists of 65,000 utterances, and each utterance is about one-second-long belonging to one out of 30 classes. We train each classifier for 30 epochs with the same training setting and data pre-processing of \citet{mixup}. In more detail, we use $160 \times 100$ normalized spectrograms of utterances for training. As shown in  \Cref{tab:speech}, we verify that Co-Mixup is still effective in the speech domain.

\begin{table}[ht]
    \centering
    \begin{tabular}{lcccccc}
    \toprule[1pt]
    Model & Vanilla & Input & Manifold & CutMix & Puzzle Mix & Co-Mixup \\
    \midrule
    LeNet & 11.24 & 10.83 & 12.33 & 12.80 & 10.89 & \textbf{10.67} \\
    VGG-11 & 4.84 & 3.91 & 3.67 & 3.76 & 3.70 & \textbf{3.57} \\
    \bottomrule[1pt]
    \end{tabular}
    \caption{Top-1 classification test error on the Google commands dataset. We stop training if validation accuracy does not increase for 5 consecutive epochs.}
    \label{tab:speech}
\end{table}

\subsection{Calibration}\label{appen:calibration}
In this section, we summarize the expected calibration error (ECE) \citep{calibration} of classifiers trained with various mixup methods. For evaluation, we use the official code provided by the TensorFlow-Probability library\footnote{\footnotesize{https://www.tensorflow.org/probability/api\_docs/python/tfp/stats/expected\_calibration\_error}} and set the number of bins as 10. As shown in \Cref{tab:calibration}, Co-Mixup classifiers have the lowest calibration error on CIFAR-100 and Tiny-ImageNet. As pointed by \citet{calibration}, the Vanilla networks have over-confident predictions, but however, we find that mixup classifiers tend to have under-confident predictions (\Cref{fig:calibration_cifar}; \Cref{fig:calibration_tiny}). As shown in the figures, Co-Mixup successfully alleviates the over-confidence issue and does not suffer from under-confidence predictions.

\begin{table}[h!]
    \centering
    \begin{tabular}{lcccccc}
    \toprule[1pt]
    Dataset & Vanilla & Input & Manifold & CutMix & Puzzle Mix & Co-Mixup \\
    \midrule
    CIFAR-100 & 3.9 & 17.7 & 13.1 & 5.6 & 7.5 & \textbf{1.9} \\
    Tiny-ImageNet & 4.5 & 6.2 & 6.8 & 12.0 & 5.6 & \textbf{2.5} \\
    ImageNet & 5.9 & \textbf{1.2} & 1.7 & 4.3 & 2.1 & 2.1 \\
    \bottomrule[1pt]
    \end{tabular}
    \caption{Expected calibration error (\%) of classifiers trained with various mixup methods on CIFAR-100, Tiny-ImageNet and ImageNet. Note that, at all of three datasets, Co-Mixup outperforms all of the baselines in Top-1 accuracy.}
    \label{tab:calibration}
\end{table}

\begin{figure}[t]
\hspace{-0.2cm}
\begin{tikzpicture}[define rgb/.code={\definecolor{mycolor}{RGB}{#1}},
                    rgb color/.style={define rgb={#1},mycolor}]
\begin{groupplot}[
        group style={columns=6, horizontal sep=0.27cm, 
        vertical sep=0.0cm},
        ]

\nextgroupplot[
            width=3.6cm,
            height=3.6cm,
            no marks,
            every axis plot/.append style={thick},
            grid=major,
            scaled ticks = false,
            xmajorticks=false,
            ymajorticks=false,
            xlabel near ticks,
            ylabel near ticks,
            tick pos=left,
            tick label style={font=\scriptsize},
            xtick={0, 0.2, 0.4, 0.6, 0.8, 1.0},
            ytick={0, 0.2, 0.4, 0.6, 0.8, 1.0},
            xlabel shift=0.0cm,         
            ylabel shift=0.0cm,
            label style={font=\scriptsize},
            xlabel style={align=center},
            xlabel={Confidence},
            ylabel={Accuracy},
            xmin=0,
            xmax=1,
            ymin=0.,
            ymax=1.,
            title style={at={(0.5, 1)},anchor=south, yshift=-0.2cm},
            title = Vanilla,
            title style = {font=\small},
            clip=false ,
            ]
            
\node at (rel axis cs: 0., -0.06) {\scriptsize 0};
\node at (rel axis cs: 1., -0.06) {\scriptsize 1};
\node at (rel axis cs: -0.06, 1.) {\scriptsize 1};

\addplot[red, dotted] table [x=base_confi, y=base_acc, col sep=comma]{data/ECE_CIFAR100.csv};
\addplot[blue] table [x=vanilla_confi, y=vanilla_acc, col sep=comma]{data/ECE_CIFAR100.csv};

\nextgroupplot[
            width=3.6cm,
            height=3.6cm,
            no marks,
            every axis plot/.append style={thick},
            grid=major,
            scaled ticks = false,
            xmajorticks=false,
            ymajorticks=false,
            xlabel near ticks,
            ylabel near ticks,
            tick pos=left,
            tick label style={font=\scriptsize},
            xtick={0, 0.2, 0.4, 0.6, 0.8, 1.0},
            ytick={0, 0.2, 0.4, 0.6, 0.8, 1.0},
            xlabel shift=0.0cm,         
            ylabel shift=0.0cm,
            label style={font=\scriptsize},
            xlabel style={align=center},
            xmin=0,
            xmax=1,
            ymin=0.,
            ymax=1.,
            title style={at={(0.5, 1)},anchor=south, yshift=-0.2cm},
            title = Input,
            title style = {font=\small},
            clip=false ,
            ]
            
\node at (rel axis cs: 0., -0.06) {\scriptsize 0};
\node at (rel axis cs: 1., -0.06) {\scriptsize 1};
\node at (rel axis cs: -0.06, 1.) {\scriptsize 1};

\addplot[red, dotted] table [x=base_confi, y=base_acc, col sep=comma]{data/ECE_CIFAR100.csv};
\addplot[blue] table [x=input_confi, y=input_acc, col sep=comma]{data/ECE_CIFAR100.csv};

\nextgroupplot[
            width=3.6cm,
            height=3.6cm,
            no marks,
            every axis plot/.append style={thick},
            grid=major,
            scaled ticks = false,
            xmajorticks=false,
            ymajorticks=false,
            xlabel near ticks,
            ylabel near ticks,
            tick pos=left,
            tick label style={font=\scriptsize},
            xtick={0, 0.2, 0.4, 0.6, 0.8, 1.0},
            ytick={0, 0.2, 0.4, 0.6, 0.8, 1.0},
            xlabel shift=0.0cm,         
            ylabel shift=0.0cm,
            label style={font=\scriptsize},
            xlabel style={align=center},
            xmin=0,
            xmax=1,
            ymin=0.,
            ymax=1.,
            title style={at={(0.5, 1)},anchor=south, yshift=-0.2cm},
            title = Manifold,
            title style = {font=\small},
            clip=false ,
            ]
            
\node at (rel axis cs: 0., -0.06) {\scriptsize 0};
\node at (rel axis cs: 1., -0.06) {\scriptsize 1};
\node at (rel axis cs: -0.06, 1.) {\scriptsize 1};

\addplot[red, dotted] table [x=base_confi, y=base_acc, col sep=comma]{data/ECE_CIFAR100.csv};
\addplot[blue] table [x=manifold_confi, y=manifold_acc, col sep=comma]{data/ECE_CIFAR100.csv};

\nextgroupplot[
            width=3.6cm,
            height=3.6cm,
            no marks,
            every axis plot/.append style={thick},
            grid=major,
            scaled ticks = false,
            xmajorticks=false,
            ymajorticks=false,
            xlabel near ticks,
            ylabel near ticks,
            tick pos=left,
            tick label style={font=\scriptsize},
            xtick={0, 0.2, 0.4, 0.6, 0.8, 1.0},
            ytick={0, 0.2, 0.4, 0.6, 0.8, 1.0},
            xlabel shift=0.0cm,         
            ylabel shift=0.0cm,
            label style={font=\scriptsize},
            xlabel style={align=center},
            xmin=0,
            xmax=1,
            ymin=0.,
            ymax=1.,
            title style={at={(0.5, 1)},anchor=south, yshift=-0.2cm},
            title = CutMix,
            title style = {font=\small},
            clip=false ,
            ]
            
\node at (rel axis cs: 0., -0.06) {\scriptsize 0};
\node at (rel axis cs: 1., -0.06) {\scriptsize 1};
\node at (rel axis cs: -0.06, 1.) {\scriptsize 1};

\addplot[red, dotted] table [x=base_confi, y=base_acc, col sep=comma]{data/ECE_CIFAR100.csv};
\addplot[blue] table [x=cut_confi, y=cut_acc, col sep=comma]{data/ECE_CIFAR100.csv};

\nextgroupplot[
            width=3.6cm,
            height=3.6cm,
            no marks,
            every axis plot/.append style={thick},
            grid=major,
            scaled ticks = false,
            xmajorticks=false,
            ymajorticks=false,
            xlabel near ticks,
            ylabel near ticks,
            tick pos=left,
            tick label style={font=\scriptsize},
            xtick={0, 0.2, 0.4, 0.6, 0.8, 1.0},
            ytick={0, 0.2, 0.4, 0.6, 0.8, 1.0},
            xlabel shift=0.0cm,         
            ylabel shift=0.0cm,
            label style={font=\scriptsize},
            xlabel style={align=center},
            xmin=0,
            xmax=1,
            ymin=0.,
            ymax=1.,
            title style={at={(0.5, 1)},anchor=south, yshift=-0.2cm},
            title = Puzzle,
            title style = {font=\small},
            clip=false ,
            ]
            
\node at (rel axis cs: 0., -0.06) {\scriptsize 0};
\node at (rel axis cs: 1., -0.06) {\scriptsize 1};
\node at (rel axis cs: -0.06, 1.) {\scriptsize 1};

\addplot[red, dotted] table [x=base_confi, y=base_acc, col sep=comma]{data/ECE_CIFAR100.csv};
\addplot[blue] table [x=puzzle_confi, y=puzzle_acc, col sep=comma]{data/ECE_CIFAR100.csv};

\nextgroupplot[
            width=3.6cm,
            height=3.6cm,
            no marks,
            every axis plot/.append style={thick},
            grid=major,
            scaled ticks = false,
            xmajorticks=false,
            ymajorticks=false,
            xlabel near ticks,
            ylabel near ticks,
            tick pos=left,
            tick label style={font=\scriptsize},
            xtick={0, 0.2, 0.4, 0.6, 0.8, 1.0},
            ytick={0, 0.2, 0.4, 0.6, 0.8, 1.0},
            xlabel shift=0.0cm,         
            ylabel shift=0.0cm,
            label style={font=\scriptsize},
            xlabel style={align=center},
            xmin=0,
            xmax=1,
            ymin=0.,
            ymax=1.,
            title style={at={(0.5, 1)},anchor=south, yshift=-0.2cm},
            title = Co-Mixup,
            title style = {font=\small},
            clip=false ,
            ]
            
\node at (rel axis cs: 0., -0.06) {\scriptsize 0};
\node at (rel axis cs: 1., -0.06) {\scriptsize 1};
\node at (rel axis cs: -0.06, 1.) {\scriptsize 1};

\addplot[red, dotted] table [x=base_confi, y=base_acc, col sep=comma]{data/ECE_CIFAR100.csv};
\addplot[blue] table [x=comix_confi, y=comix_acc, col sep=comma]{data/ECE_CIFAR100.csv};

\end{groupplot}
\end{tikzpicture}
\vspace{-0.5cm}
\caption{Confidence-Accuracy plots for classifiers on CIFAR-100. Note, ECE is calculated by the mean absolute difference between the two values.}
\label{fig:calibration_cifar}
\end{figure}

\begin{figure}[t]
\hspace{-0.2cm}
\begin{tikzpicture}[define rgb/.code={\definecolor{mycolor}{RGB}{#1}},
                    rgb color/.style={define rgb={#1},mycolor}]
\begin{groupplot}[
        group style={columns=6, horizontal sep=0.27cm, 
        vertical sep=0.0cm},
        ]

\nextgroupplot[
            width=3.6cm,
            height=3.6cm,
            no marks,
            every axis plot/.append style={thick},
            grid=major,
            scaled ticks = false,
            xmajorticks=false,
            ymajorticks=false,
            xlabel near ticks,
            ylabel near ticks,
            tick pos=left,
            tick label style={font=\scriptsize},
            xtick={0, 0.2, 0.4, 0.6, 0.8, 1.0},
            ytick={0, 0.2, 0.4, 0.6, 0.8, 1.0},
            xlabel shift=0.0cm,         
            ylabel shift=0.0cm,
            label style={font=\scriptsize},
            xlabel style={align=center},
            xlabel={Confidence},
            ylabel={Accuracy},
            xmin=0,
            xmax=1,
            ymin=0.,
            ymax=1.,
            title style={at={(0.5, 1)},anchor=south, yshift=-0.2cm},
            title = Vanilla,
            title style = {font=\small},
            clip=false ,
            ]
            
\node at (rel axis cs: 0., -0.06) {\scriptsize 0};
\node at (rel axis cs: 1., -0.06) {\scriptsize 1};
\node at (rel axis cs: -0.06, 1.) {\scriptsize 1};

\addplot[red, dotted] table [x=base_confi, y=base_acc, col sep=comma]{data/ECE_Tiny-ImageNet.csv};
\addplot[blue] table [x=vanilla_confi, y=vanilla_acc, col sep=comma]{data/ECE_Tiny-ImageNet.csv};

\nextgroupplot[
            width=3.6cm,
            height=3.6cm,
            no marks,
            every axis plot/.append style={thick},
            grid=major,
            scaled ticks = false,
            xmajorticks=false,
            ymajorticks=false,
            xlabel near ticks,
            ylabel near ticks,
            tick pos=left,
            tick label style={font=\scriptsize},
            xtick={0, 0.2, 0.4, 0.6, 0.8, 1.0},
            ytick={0, 0.2, 0.4, 0.6, 0.8, 1.0},
            xlabel shift=0.0cm,         
            ylabel shift=0.0cm,
            label style={font=\scriptsize},
            xlabel style={align=center},
            xmin=0,
            xmax=1,
            ymin=0.,
            ymax=1.,
            title style={at={(0.5, 1)},anchor=south, yshift=-0.2cm},
            title = Input,
            title style = {font=\small},
            clip=false ,
            ]
            
\node at (rel axis cs: 0., -0.06) {\scriptsize 0};
\node at (rel axis cs: 1., -0.06) {\scriptsize 1};
\node at (rel axis cs: -0.06, 1.) {\scriptsize 1};

\addplot[red, dotted] table [x=base_confi, y=base_acc, col sep=comma]{data/ECE_Tiny-ImageNet.csv};
\addplot[blue] table [x=input_confi, y=input_acc, col sep=comma]{data/ECE_Tiny-ImageNet.csv};

\nextgroupplot[
            width=3.6cm,
            height=3.6cm,
            no marks,
            every axis plot/.append style={thick},
            grid=major,
            scaled ticks = false,
            xmajorticks=false,
            ymajorticks=false,
            xlabel near ticks,
            ylabel near ticks,
            tick pos=left,
            tick label style={font=\scriptsize},
            xtick={0, 0.2, 0.4, 0.6, 0.8, 1.0},
            ytick={0, 0.2, 0.4, 0.6, 0.8, 1.0},
            xlabel shift=0.0cm,         
            ylabel shift=0.0cm,
            label style={font=\scriptsize},
            xlabel style={align=center},
            xmin=0,
            xmax=1,
            ymin=0.,
            ymax=1.,
            title style={at={(0.5, 1)},anchor=south, yshift=-0.2cm},
            title = Manifold,
            title style = {font=\small},
            clip=false ,
            ]
            
\node at (rel axis cs: 0., -0.06) {\scriptsize 0};
\node at (rel axis cs: 1., -0.06) {\scriptsize 1};
\node at (rel axis cs: -0.06, 1.) {\scriptsize 1};

\addplot[red, dotted] table [x=base_confi, y=base_acc, col sep=comma]{data/ECE_Tiny-ImageNet.csv};
\addplot[blue] table [x=manifold_confi, y=manifold_acc, col sep=comma]{data/ECE_Tiny-ImageNet.csv};

\nextgroupplot[
            width=3.6cm,
            height=3.6cm,
            no marks,
            every axis plot/.append style={thick},
            grid=major,
            scaled ticks = false,
            xmajorticks=false,
            ymajorticks=false,
            xlabel near ticks,
            ylabel near ticks,
            tick pos=left,
            tick label style={font=\scriptsize},
            xtick={0, 0.2, 0.4, 0.6, 0.8, 1.0},
            ytick={0, 0.2, 0.4, 0.6, 0.8, 1.0},
            xlabel shift=0.0cm,         
            ylabel shift=0.0cm,
            label style={font=\scriptsize},
            xlabel style={align=center},
            xmin=0,
            xmax=1,
            ymin=0.,
            ymax=1.,
            title style={at={(0.5, 1)},anchor=south, yshift=-0.2cm},
            title = CutMix,
            title style = {font=\small},
            clip=false ,
            ]
            
\node at (rel axis cs: 0., -0.06) {\scriptsize 0};
\node at (rel axis cs: 1., -0.06) {\scriptsize 1};
\node at (rel axis cs: -0.06, 1.) {\scriptsize 1};

\addplot[red, dotted] table [x=base_confi, y=base_acc, col sep=comma]{data/ECE_Tiny-ImageNet.csv};
\addplot[blue] table [x=cut_confi, y=cut_acc, col sep=comma]{data/ECE_Tiny-ImageNet.csv};

\nextgroupplot[
            width=3.6cm,
            height=3.6cm,
            no marks,
            every axis plot/.append style={thick},
            grid=major,
            scaled ticks = false,
            xmajorticks=false,
            ymajorticks=false,
            xlabel near ticks,
            ylabel near ticks,
            tick pos=left,
            tick label style={font=\scriptsize},
            xtick={0, 0.2, 0.4, 0.6, 0.8, 1.0},
            ytick={0, 0.2, 0.4, 0.6, 0.8, 1.0},
            xlabel shift=0.0cm,         
            ylabel shift=0.0cm,
            label style={font=\scriptsize},
            xlabel style={align=center},
            xmin=0,
            xmax=1,
            ymin=0.,
            ymax=1.,
            title style={at={(0.5, 1)},anchor=south, yshift=-0.2cm},
            title = Puzzle,
            title style = {font=\small},
            clip=false ,
            ]
            
\node at (rel axis cs: 0., -0.06) {\scriptsize 0};
\node at (rel axis cs: 1., -0.06) {\scriptsize 1};
\node at (rel axis cs: -0.06, 1.) {\scriptsize 1};

\addplot[red, dotted] table [x=base_confi, y=base_acc, col sep=comma]{data/ECE_Tiny-ImageNet.csv};
\addplot[blue] table [x=puzzle_confi, y=puzzle_acc, col sep=comma]{data/ECE_Tiny-ImageNet.csv};

\nextgroupplot[
            width=3.6cm,
            height=3.6cm,
            no marks,
            every axis plot/.append style={thick},
            grid=major,
            scaled ticks = false,
            xmajorticks=false,
            ymajorticks=false,
            xlabel near ticks,
            ylabel near ticks,
            tick pos=left,
            tick label style={font=\scriptsize},
            xtick={0, 0.2, 0.4, 0.6, 0.8, 1.0},
            ytick={0, 0.2, 0.4, 0.6, 0.8, 1.0},
            xlabel shift=0.0cm,         
            ylabel shift=0.0cm,
            label style={font=\scriptsize},
            xlabel style={align=center},
            xmin=0,
            xmax=1,
            ymin=0.,
            ymax=1.,
            title style={at={(0.5, 1)},anchor=south, yshift=-0.2cm},
            title = Co-Mixup,
            title style = {font=\small},
            clip=false ,
            ]
            
\node at (rel axis cs: 0., -0.06) {\scriptsize 0};
\node at (rel axis cs: 1., -0.06) {\scriptsize 1};
\node at (rel axis cs: -0.06, 1.) {\scriptsize 1};

\addplot[red, dotted] table [x=base_confi, y=base_acc, col sep=comma]{data/ECE_Tiny-ImageNet.csv};
\addplot[blue] table [x=comix_confi, y=comix_acc, col sep=comma]{data/ECE_Tiny-ImageNet.csv};

\end{groupplot}
\end{tikzpicture}
\vspace{-0.5cm}
\caption{Confidence-Accuracy plots for classifiers on Tiny-ImageNet.}
\label{fig:calibration_tiny}
\end{figure}

\begin{figure}[t!]
\hspace{-0.2cm}
\begin{tikzpicture}[define rgb/.code={\definecolor{mycolor}{RGB}{#1}},
                    rgb color/.style={define rgb={#1},mycolor}]
\begin{groupplot}[
        group style={columns=6, horizontal sep=0.27cm, 
        vertical sep=0.0cm},
        ]

\nextgroupplot[
            width=3.6cm,
            height=3.6cm,
            no marks,
            every axis plot/.append style={thick},
            grid=major,
            scaled ticks = false,
            xmajorticks=false,
            ymajorticks=false,
            xlabel near ticks,
            ylabel near ticks,
            tick pos=left,
            tick label style={font=\scriptsize},
            xtick={0, 0.2, 0.4, 0.6, 0.8, 1.0},
            ytick={0, 0.2, 0.4, 0.6, 0.8, 1.0},
            xlabel shift=0.0cm,         
            ylabel shift=0.0cm,
            label style={font=\scriptsize},
            xlabel style={align=center},
            xlabel={Confidence},
            ylabel={Accuracy},
            xmin=0,
            xmax=1,
            ymin=0.,
            ymax=1.,
            title style={at={(0.5, 1)},anchor=south, yshift=-0.2cm},
            title = Vanilla,
            title style = {font=\small},
            clip=false ,
            ]
            
\node at (rel axis cs: 0., -0.06) {\scriptsize 0};
\node at (rel axis cs: 1., -0.06) {\scriptsize 1};
\node at (rel axis cs: -0.06, 1.) {\scriptsize 1};

\addplot[red, dotted] table [x=base_confi, y=base_acc, col sep=comma]{data/ECE_ImageNet.csv};
\addplot[blue] table [x=vanilla_confi, y=vanilla_acc, col sep=comma]{data/ECE_ImageNet.csv};

\nextgroupplot[
            width=3.6cm,
            height=3.6cm,
            no marks,
            every axis plot/.append style={thick},
            grid=major,
            scaled ticks = false,
            xmajorticks=false,
            ymajorticks=false,
            xlabel near ticks,
            ylabel near ticks,
            tick pos=left,
            tick label style={font=\scriptsize},
            xtick={0, 0.2, 0.4, 0.6, 0.8, 1.0},
            ytick={0, 0.2, 0.4, 0.6, 0.8, 1.0},
            xlabel shift=0.0cm,         
            ylabel shift=0.0cm,
            label style={font=\scriptsize},
            xlabel style={align=center},
            xmin=0,
            xmax=1,
            ymin=0.,
            ymax=1.,
            title style={at={(0.5, 1)},anchor=south, yshift=-0.2cm},
            title = Input,
            title style = {font=\small},
            clip=false ,
            ]
            
\node at (rel axis cs: 0., -0.06) {\scriptsize 0};
\node at (rel axis cs: 1., -0.06) {\scriptsize 1};
\node at (rel axis cs: -0.06, 1.) {\scriptsize 1};

\addplot[red, dotted] table [x=base_confi, y=base_acc, col sep=comma]{data/ECE_ImageNet.csv};
\addplot[blue] table [x=input_confi, y=input_acc, col sep=comma]{data/ECE_ImageNet.csv};

\nextgroupplot[
            width=3.6cm,
            height=3.6cm,
            no marks,
            every axis plot/.append style={thick},
            grid=major,
            scaled ticks = false,
            xmajorticks=false,
            ymajorticks=false,
            xlabel near ticks,
            ylabel near ticks,
            tick pos=left,
            tick label style={font=\scriptsize},
            xtick={0, 0.2, 0.4, 0.6, 0.8, 1.0},
            ytick={0, 0.2, 0.4, 0.6, 0.8, 1.0},
            xlabel shift=0.0cm,         
            ylabel shift=0.0cm,
            label style={font=\scriptsize},
            xlabel style={align=center},
            xmin=0,
            xmax=1,
            ymin=0.,
            ymax=1.,
            title style={at={(0.5, 1)},anchor=south, yshift=-0.2cm},
            title = Manifold,
            title style = {font=\small},
            clip=false ,
            ]
            
\node at (rel axis cs: 0., -0.06) {\scriptsize 0};
\node at (rel axis cs: 1., -0.06) {\scriptsize 1};
\node at (rel axis cs: -0.06, 1.) {\scriptsize 1};

\addplot[red, dotted] table [x=base_confi, y=base_acc, col sep=comma]{data/ECE_ImageNet.csv};
\addplot[blue] table [x=manifold_confi, y=manifold_acc, col sep=comma]{data/ECE_ImageNet.csv};

\nextgroupplot[
            width=3.6cm,
            height=3.6cm,
            no marks,
            every axis plot/.append style={thick},
            grid=major,
            scaled ticks = false,
            xmajorticks=false,
            ymajorticks=false,
            xlabel near ticks,
            ylabel near ticks,
            tick pos=left,
            tick label style={font=\scriptsize},
            xtick={0, 0.2, 0.4, 0.6, 0.8, 1.0},
            ytick={0, 0.2, 0.4, 0.6, 0.8, 1.0},
            xlabel shift=0.0cm,         
            ylabel shift=0.0cm,
            label style={font=\scriptsize},
            xlabel style={align=center},
            xmin=0,
            xmax=1,
            ymin=0.,
            ymax=1.,
            title style={at={(0.5, 1)},anchor=south, yshift=-0.2cm},
            title = CutMix,
            title style = {font=\small},
            clip=false ,
            ]
            
\node at (rel axis cs: 0., -0.06) {\scriptsize 0};
\node at (rel axis cs: 1., -0.06) {\scriptsize 1};
\node at (rel axis cs: -0.06, 1.) {\scriptsize 1};

\addplot[red, dotted] table [x=base_confi, y=base_acc, col sep=comma]{data/ECE_ImageNet.csv};
\addplot[blue] table [x=cut_confi, y=cut_acc, col sep=comma]{data/ECE_ImageNet.csv};

\nextgroupplot[
            width=3.6cm,
            height=3.6cm,
            no marks,
            every axis plot/.append style={thick},
            grid=major,
            scaled ticks = false,
            xmajorticks=false,
            ymajorticks=false,
            xlabel near ticks,
            ylabel near ticks,
            tick pos=left,
            tick label style={font=\scriptsize},
            xtick={0, 0.2, 0.4, 0.6, 0.8, 1.0},
            ytick={0, 0.2, 0.4, 0.6, 0.8, 1.0},
            xlabel shift=0.0cm,         
            ylabel shift=0.0cm,
            label style={font=\scriptsize},
            xlabel style={align=center},
            xmin=0,
            xmax=1,
            ymin=0.,
            ymax=1.,
            title style={at={(0.5, 1)},anchor=south, yshift=-0.2cm},
            title = Puzzle,
            title style = {font=\small},
            clip=false ,
            ]
            
\node at (rel axis cs: 0., -0.06) {\scriptsize 0};
\node at (rel axis cs: 1., -0.06) {\scriptsize 1};
\node at (rel axis cs: -0.06, 1.) {\scriptsize 1};

\addplot[red, dotted] table [x=base_confi, y=base_acc, col sep=comma]{data/ECE_ImageNet.csv};
\addplot[blue] table [x=puzzle_confi, y=puzzle_acc, col sep=comma]{data/ECE_ImageNet.csv};

\nextgroupplot[
            width=3.6cm,
            height=3.6cm,
            no marks,
            every axis plot/.append style={thick},
            grid=major,
            scaled ticks = false,
            xmajorticks=false,
            ymajorticks=false,
            xlabel near ticks,
            ylabel near ticks,
            tick pos=left,
            tick label style={font=\scriptsize},
            xtick={0, 0.2, 0.4, 0.6, 0.8, 1.0},
            ytick={0, 0.2, 0.4, 0.6, 0.8, 1.0},
            xlabel shift=0.0cm,         
            ylabel shift=0.0cm,
            label style={font=\scriptsize},
            xlabel style={align=center},
            xmin=0,
            xmax=1,
            ymin=0.,
            ymax=1.,
            title style={at={(0.5, 1)},anchor=south, yshift=-0.2cm},
            title = Co-Mixup,
            title style = {font=\small},
            clip=false ,
            ]
            
\node at (rel axis cs: 0., -0.06) {\scriptsize 0};
\node at (rel axis cs: 1., -0.06) {\scriptsize 1};
\node at (rel axis cs: -0.06, 1.) {\scriptsize 1};

\addplot[red, dotted] table [x=base_confi, y=base_acc, col sep=comma]{data/ECE_ImageNet.csv};
\addplot[blue] table [x=comix_confi, y=comix_acc, col sep=comma]{data/ECE_ImageNet.csv};

\end{groupplot}
\end{tikzpicture}
\vspace{-0.5cm}
\caption{Confidence-Accuracy plots for classifiers on ImageNet.}
\label{fig:calibration_imgnet}
\end{figure}

\subsection{Sensitivity analysis}\label{appen:sensitivity}
We measure the Top-1 error rate of the model by sweeping the hyperparameter to show the sensitivity using PreActResNet18 on CIFAR-100 dataset. We sweep the label smoothness coefficient $\beta \in \{0, 0.16, 0.32, 0.48, 0.64\}$, compatibility coefficient $\gamma \in \{0.6, 0.8, 1.0, 1.2, 1.4\}$, clipping level $\tau \in \{0.79, 0.81, 0.83, 0.85, 0.87\}$, compatibility matrix parameter $\omega \in \{0, 5\cdot 10^{-4}, 10^{-3}, 5\cdot 10^{-3}, 10^{-2}\}$, and the size of partition $m \in \{2, 4, 10, 20, 50\}$. \Cref{tab:hyperparameter} shows that Co-Mixup outperforms the best baseline (PuzzleMix, 20.62\%) with a large pool of hyperparameters. We also find that Top-1 error rate increases as the partition batch size $m$ increases until $m=20$.

\begin{table}[h!]
    \centering
    \small{
    \begin{tabular}{ccccccc}
    \toprule[1pt]
    Smoothness coefficient, & $\beta=0$ & $\beta=0.16$ & $\beta=0.32$ & $\beta=0.48$ & $\beta=0.64$ \\
     $\beta$ & 20.29 & 20.18 & \textbf{19.87} & 20.35 & 21.24 \\
    \midrule
    Compatibility coefficient, & $\gamma=0.6$ & $\gamma=0.8$ & $\gamma=1.0$ & $\gamma=1.2$ & $\gamma=1.4$ \\
    $\gamma$ & 20.3 & 19.99 & \textbf{19.87} & 20.09 & 20.13 \\
    \midrule
    Clipping parameter, & $\tau=0.79$ & $\tau=0.81$ & $\tau=0.83$ & $\tau=0.85$ & $\tau=0.87$ \\
    $\tau$ & 20.45 & 20.14 & \textbf{19.87} & 20.15 & 20.23 \\
    \midrule
    Compatibility matrix & $\omega=0$ & $\omega=5\cdot 10^{-4}$ & $\omega=10^{-3}$ & $\omega=5\cdot 10^{-3}$ & $\omega=10^{-2}$ \\
    parameter, $\omega$ & 20.51 & 20.42 & \textbf{19.87} & 20.18 & 20.14 \\
    \midrule
    Partition size, & $m=2$ & $m=4$ & $m=10$ & $m=20$ & $m=50$\\
    $m$ & 20.3 & 20.22 & 20.15 & \textbf{19.87} & 19.96 \\
    \bottomrule[1pt]
    \end{tabular}
    }
    \caption{Hyperparameter sensitivity results (Top-1 error rates) on CIFAR-100 with PreActResNet18. We report the mean values of three different random seeds.}
    \label{tab:hyperparameter}
\end{table}

\subsection{Comparison with non-mixup baselines}
We compare the generalization performance of Co-Mixup with non-mixup baselines, verifying the proposed method achieves the state of the art generalization performance not only for the mixup-based methods but for other general regularization based methods. One of the regularization methods called VAT \citep{miyato2018virtual} uses virtual adversarial loss, which is defined as the KL-divergence of predictions between input data against local perturbation. We perform the experiment with VAT regularization on CIFAR-100 with PreActResNet18 for 300 epochs in the supervised setting. We tune $\alpha$ (coefficient of VAT regularization term) in \{0.001, 0.01, 0.1\}, $\epsilon$ (radius of $\ell$-inf ball) in \{1, 2\}, and the number of noise update steps in \{0, 1\}. \Cref{tab:vat} shows that Co-Mixup, which achieves Top-1 error rate of 19.87\%, outperforms the VAT regularization method.
\begin{table}[ht]
    \centering
    \begin{tabular}{lcccc}
    \toprule[1pt]
     & \multicolumn{2}{c}{\# update=0} & \multicolumn{2}{c}{\# update=1} \\
    VAT loss coefficient & $\epsilon=1$ & $\epsilon=2$ & $\epsilon=1$ & $\epsilon=2$ \\
    \midrule
    $\alpha=0.001$ & 23.38 & 23.62 & 24.76 & 26.22  \\
    $\alpha=0.01$ & 23.14 & 23.67 & 28.33 & 31.95  \\
    $\alpha=0.1$ & 23.65 & 23.88 & 34.75 & 39.82  \\
    \bottomrule[1pt]
    \end{tabular}
    \caption{Top-1 error rates of VAT on CIFAR-100 dataset with PreActResNet18.}
    \label{tab:vat}
\end{table}


\section{Detailed description for background corruption}\label{appen:corruption}
We build the background corrupted test datasets based on ImageNet validation dataset to compare the robustness of the pre-trained classifiers against the background corruption. ImageNet consists of images $\{x_1, ..., x_M\}$, labels $\{y_1, ..., y_M\}$, and the corresponding ground-truth bounding boxes $\{b_1, ..., b_M\}$. We use the ground-truth bounding boxes to separate the foreground from the background. Let $z_j$ be a binary mask of image $x_j$, which has value $1$ inside of the ground-truth bounding box $b_j$. Then, we generate two types of background corrupted sample $\tilde{x}_j$ by considering the following operations:
\begin{enumerate}[leftmargin=0.7cm]
\item Replacement with another image as $\tilde{x}_j = x_j \odot z_j + x_{i(j)} \odot (1-z_j)$ for a random permutation $\{i(1), ..., i(M)\}$. 
\item Adding Gaussian noise as $\tilde{x}_j = x_j \odot z_j + \epsilon \odot (1-z_j)$, where $\epsilon \sim N(0, 0.1^2)$. We clip pixel values of $\tilde{x}_j$ to [0, 1].
\end{enumerate}

\Cref{fig:robust-dataset} visualizes subsets of the background corruption test datasets.

\begin{figure}[h!]
  \centering
  \begin{subfigure}[b]{0.3\textwidth}
  \includegraphics[width=\textwidth]{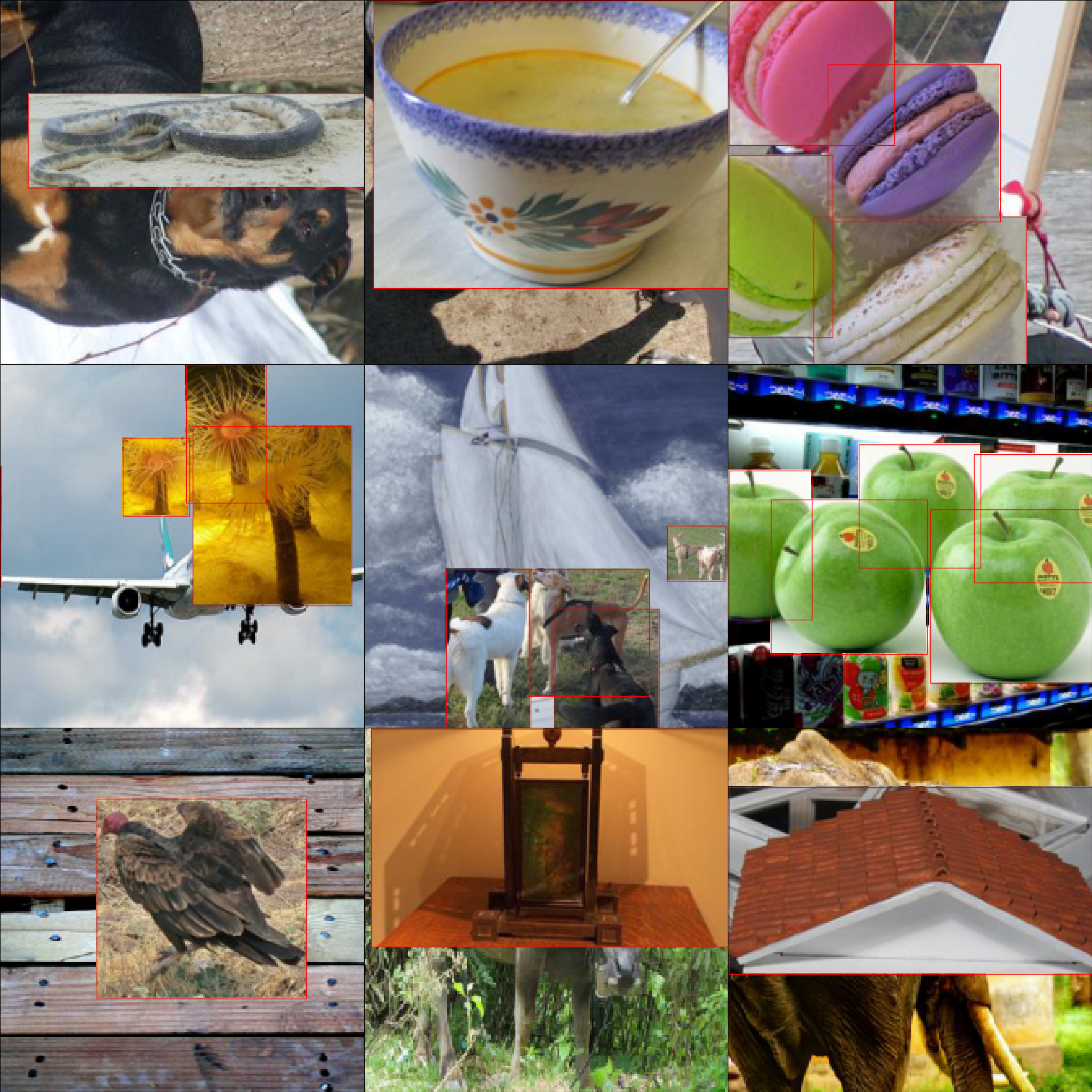}
  \caption{}
  \label{fig:robust-img}
  \end{subfigure}
  ~
  \begin{subfigure}[b]{0.3\textwidth}
  \includegraphics[width=\textwidth]{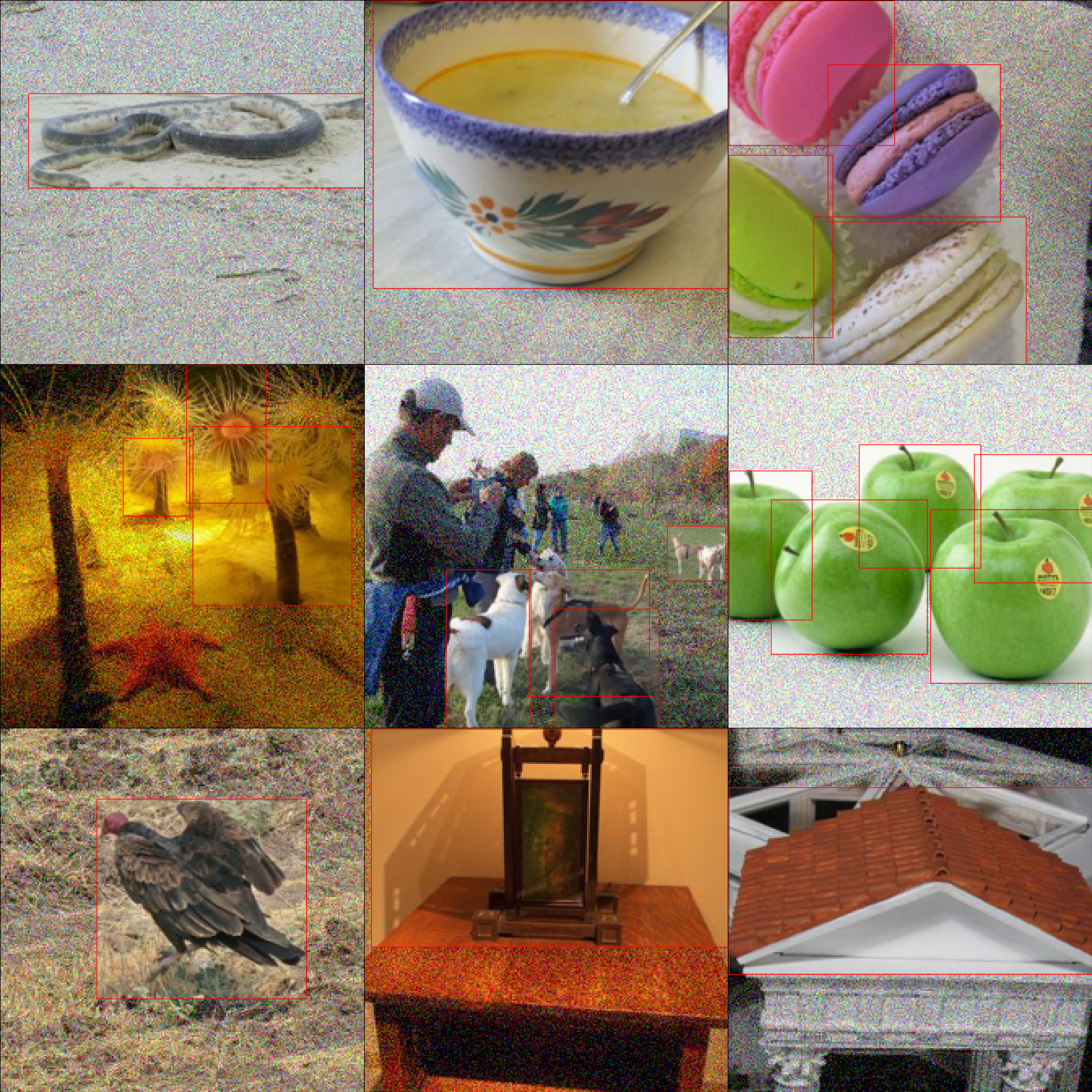}
  \caption{}
  \label{fig:robust-noise}
  \end{subfigure}
  \caption{Each subfigure shows background corrupted samples used in the robustness experiment. (a) Replacement with another image in ImageNet. (b) Adding Gaussian noise. The red boxes on the images represent ground-truth bounding boxes.}
  \label{fig:robust-dataset}
\end{figure}

\section{Co-Mixup generated samples}\label{appen:figures}
In \Cref{fig:output}, we present Co-Mixup generated image samples by using images from ImageNet. We use an input batch consisting of 24 images, which is visualized in \Cref{fig:input}. As can be seen from \Cref{fig:output}, Co-Mixup efficiently mix-matches salient regions of the given inputs maximizing saliency and creates diverse outputs. In \Cref{fig:output}, inputs with the target objects on the left side are mixed with the objects on the right side, and objects on the top side are mixed with the objects on the bottom side. In \Cref{fig:output2}, we present Co-Mixup generated image samples with larger $\tau$ using the same input batch. By increasing $\tau$, we can encourage Co-Mixup to use more inputs to mix per each output.

\begin{figure}
    \centering
    \includegraphics[width=\textwidth]{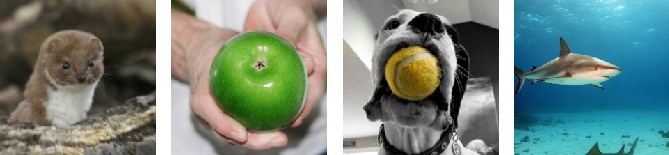}\\
    \vspace{0.2cm}
    \includegraphics[width=\textwidth]{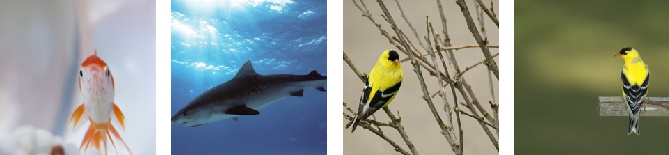}\\
    \vspace{0.2cm}
    \includegraphics[width=\textwidth]{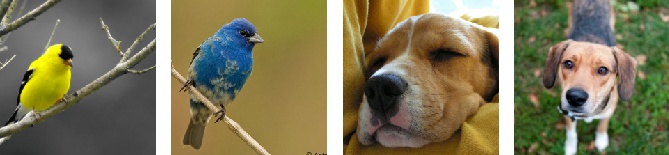}\\
    \vspace{0.2cm}
    \includegraphics[width=\textwidth]{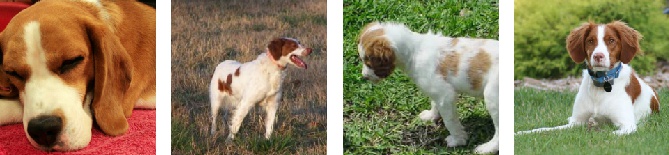}\\
    \vspace{0.2cm}
    \includegraphics[width=\textwidth]{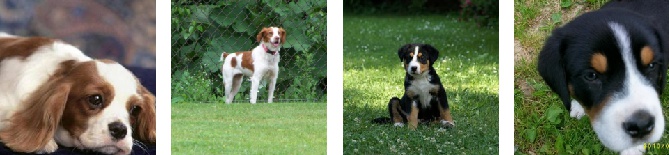}\\  
    \vspace{0.2cm}
    \includegraphics[width=\textwidth]{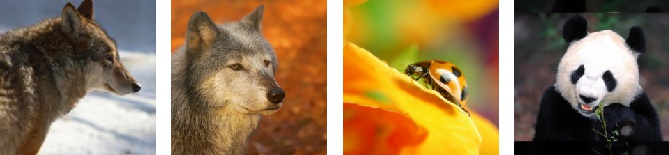}    
    \caption{Input batch.}
    \label{fig:input}
\end{figure}

\begin{figure}
    \centering
    \includegraphics[width=\textwidth]{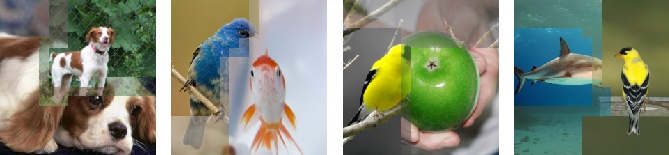}\\
    \vspace{0.2cm}
    \includegraphics[width=\textwidth]{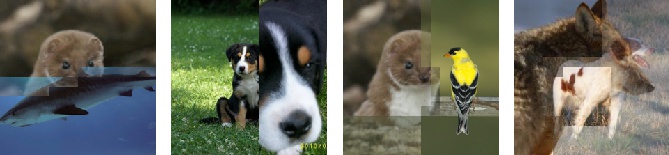}\\
    \vspace{0.2cm}
    \includegraphics[width=\textwidth]{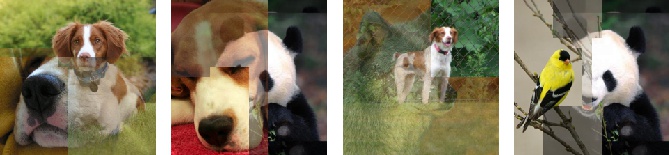}\\
    \vspace{0.2cm}
    \includegraphics[width=\textwidth]{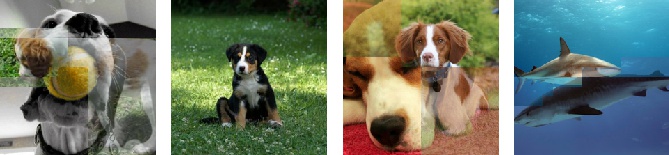}\\
    \vspace{0.2cm}
    \includegraphics[width=\textwidth]{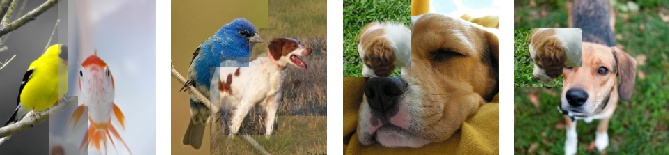}\\  
    \vspace{0.2cm}
    \includegraphics[width=\textwidth]{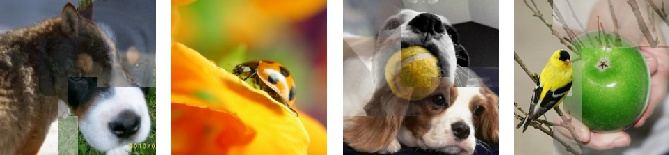}    
    \caption{Mixed output batch.}
    \label{fig:output}
\end{figure}

\begin{figure}
    \centering
    \includegraphics[width=\textwidth]{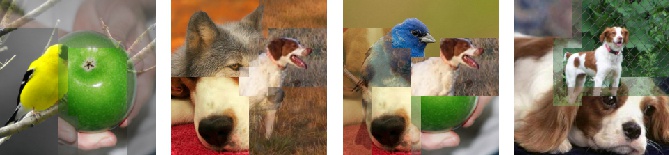}\\
    \vspace{0.2cm}
    \includegraphics[width=\textwidth]{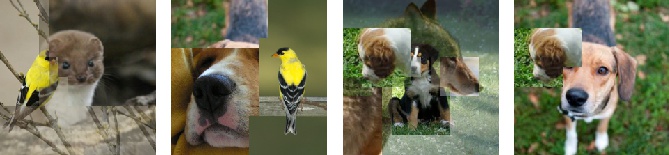}\\
    \vspace{0.2cm}
    \includegraphics[width=\textwidth]{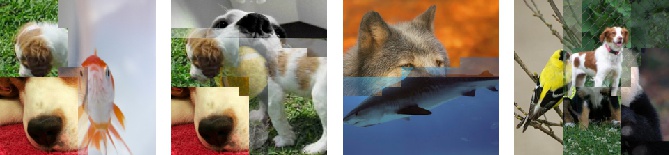}\\
    \vspace{0.2cm}
    \includegraphics[width=\textwidth]{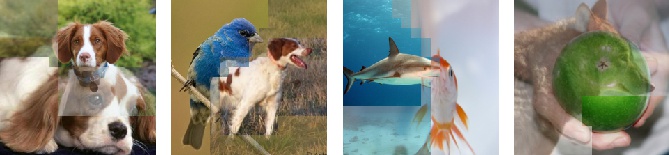}\\
    \vspace{0.2cm}
    \includegraphics[width=\textwidth]{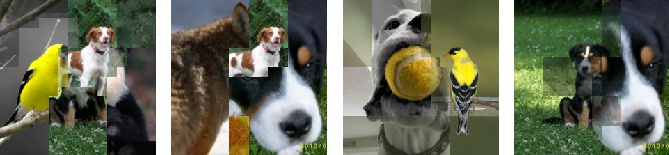}\\  
    \vspace{0.2cm}
    \includegraphics[width=\textwidth]{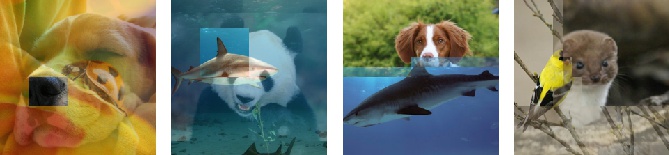}    
    \caption{Another mixed output batch with larger $\tau$.}
    \label{fig:output2}
\end{figure}

\end{document}